\documentclass[11pt]{article}

\usepackage{amsmath,amssymb,amsthm,mathrsfs}
\usepackage{graphicx}
\usepackage{hyperref}
\usepackage{aliascnt}
\usepackage{cleveref}
\usepackage[margin=1in]{geometry}
\usepackage{tikz}
\usetikzlibrary{positioning, decorations.markings, shapes.geometric}
\usepackage{xcolor}
\usepackage{marginnote}

\newtheorem{theorem}{Theorem}
\crefname{theorem}{Theorem}{Theorems}
\Crefname{theorem}{Theorem}{Theorems}

\newaliascnt{lemma}{theorem}
\newtheorem{lemma}[lemma]{Lemma}
\aliascntresetthe{lemma}
\crefname{lemma}{Lemma}{Lemmas}
\Crefname{lemma}{Lemma}{Lemmas}

\newaliascnt{proposition}{theorem}

\aliascntresetthe{proposition}
\crefname{proposition}{Proposition}{Propositions}
\Crefname{proposition}{Proposition}{Propositions}

\newaliascnt{corollary}{theorem}
\newtheorem{corollary}[corollary]{Corollary}
\aliascntresetthe{corollary}
\crefname{corollary}{Corollary}{Corollaries}
\Crefname{corollary}{Corollary}{Corollaries}

\newaliascnt{remark}{theorem}
\newtheorem{remark}[remark]{Remark}
\aliascntresetthe{remark}
\crefname{remark}{Remark}{Remarks}
\Crefname{remark}{Remark}{Remarks}

\newaliascnt{definition}{theorem}
\newtheorem{definition}[definition]{Definition}
\aliascntresetthe{definition}
\crefname{definition}{Definition}{Definitions}
\Crefname{definition}{Definition}{Definitions}

\newaliascnt{example}{theorem}
\newtheorem{example}[example]{Example}
\aliascntresetthe{example}
\crefname{example}{Example}{Examples}
\Crefname{example}{Example}{Examples}

\newcommand{\E}{\mathbb{E}}
\newcommand{\R}{\mathbb{R}}
\newcommand{\Cov}{\mathrm{Cov}}

\newcommand{\laplaceCommit}{0d9e870}
\newcommand{\leanref}[2]{%
  \marginnote{%
    \begingroup
    \hypersetup{urlbordercolor=white, pdfborder={0 0 0},
                colorlinks=false}%
    \href{https://github.com/timaeus-research/laplace/blob/\laplaceCommit/#1}{%
      \ensuremath{\square}}%
    \endgroup}}

\title{Susceptibilities and Patterning: A Primer on Linear Response in Bayesian Learning}

\author{Chris Elliott\\\textit{Timaeus} \and Daniel Murfet\\\textit{Timaeus}}

\date{\today}

\begin{document}
\maketitle

\begin{abstract}
These notes introduce the theory of susceptibilities as developed in \cite{lang2,lang3} for interpreting neural networks. The susceptibility of an observable $\phi$ to a data perturbation is defined as a derivative of a posterior expectation, which by the fluctuation--dissipation theorem equals a posterior covariance. Different choices of $\phi$ yield different objects: per-sample losses give the \emph{influence matrix} (the Bayesian influence function of \cite{singfluence1}), while component-localized observables give the \emph{structural susceptibility matrix} that pairs model components with data patterns. The susceptibility matrix is (up to a factor of $n\beta$) the Jacobian of the map from data distributions to structural coordinates; its pseudo-inverse provides a linearized solution to the \emph{patterning} problem of \cite{patterning}: finding data perturbations that produce a desired structural change. We motivate the theory from its statistical-mechanical foundations, then give a detailed exposition of susceptibilities, their empirical estimators, and their connection to the geometry of the loss landscape.
\end{abstract}

\tableofcontents

\section{Introduction} \label{sec:intro}

There is a systematic parallel between two applications of statistical modeling to a priori very different domains: the statistical mechanics of physical systems, and the analysis of the training of machine learning models.  The present notes aim to make this parallel explicit and to develop its consequences for a theoretically grounded approach to interpretability.
\begin{enumerate}
    \item \emph{Statistical mechanics}: in this setting we have a physical system with a configuration space $W$ and a real-valued energy function $H$ on $W$, whose statistics at inverse temperature $\beta$ are governed by the Boltzmann distribution with density proportional to $e^{-\beta H(w)}$.  The quantities of interest are expectation values of observables on $W$ under this distribution, together with the ways in which these respond to perturbations -- for instance, by an applied external field.
    \item \emph{Machine learning}: in this setting we have a statistical model (a family of probability distributions parameterized by a space $W$) for which we wish to minimize a loss function $L$ on $W$.  In practice one cannot work with $L$ directly, as it involves an expectation over the true data distribution, which is unknown.  Instead one has access to a finite dataset $x_1, \ldots, x_n$ and either optimizes an empirical loss $L_n$ via stochastic gradient descent, or reasons in terms of the Bayesian posterior distribution $p(w \mid x_1, \ldots, x_n) \propto p(x_1, \ldots, x_n \mid w)\pi(w)$ induced by the data and a prior $\pi$ on $W$.  When the loss is a negative log-likelihood the Bayesian posterior takes the exponential form $\propto \exp(-n L_n(w))\pi(w)$.
\end{enumerate}
Our main interest is the application of \emph{ideas} from the former setting to \emph{problems} in the latter setting.  In particular, we focus on \emph{susceptibilities}: derivatives of posterior expectation values with respect to perturbations of the data distribution.  These are a direct analog of the thermodynamic susceptibilities of statistical mechanics, and we will argue that they provide a systematic and computable tool for reading the internal structure of a trained model.

We have designed these notes to be of interest to practitioners in both domains.
\begin{enumerate}
    \item We hope that \emph{physicists} will recognize in Bayesian learning a familiar thermodynamic structure -- a configuration space, an energy function, a Boltzmann distribution, and a natural notion of perturbation -- and will find in the learning theory setting a novel class of systems to which their familiar techniques are strikingly applicable.
    \item For \emph{machine learning} scientists and practitioners, especially those with an interest in interpretability, we aim to explain why studying variations of expected values for the Bayesian posterior are a principled tool for reading the internal structure of a trained model, how they relate to and extend existing ideas such as influence functions and training data attribution, and how singular learning theory bridges the gap between population-level definitions and the empirical estimators one actually computes.
\end{enumerate}

\subsection{Structure of what follows}

We have organized the material so that readers with different backgrounds can find an efficient path through it.  Section \ref{sec:systems} develops the statistical-mechanical framework and culminates in the analogy with machine learning in \S\ref{subsec:physics_to_nn}; physicists already familiar with the Ising model and the fluctuation--dissipation theorem may wish to skim the earlier subsections and start from there.  Machine learning readers, particularly those who are not yet convinced that posterior covariances are a reasonable thing to study, should take their time with the Ising experiments of \S\ref{subsec:ising_suscept}--\ref{subsec:response_matrix}: these are intended to build the intuition that susceptibilities probe internal structure through external probes.

The remaining sections develop the theory in the machine learning setting: \S\ref{sec:setup}--\ref{sec:definition} lay out the setup, define the susceptibility, and establish the fluctuation--dissipation theorem in this setting; \S\ref{sec:geometry} develops the geometric content of susceptibilities via the Laplace approximation; \S\ref{sec:patterning} frames the susceptibility matrix as a tangent map and develops patterning as the inverse problem; and \S\ref{sec:pop_emp} addresses the passage from population-level theory to the empirical estimators used in practice.  This last section is mainly of interest to readers who wish to understand whether and how the earlier material can actually be realized.  

\section{Systems, configurations, and what we can observe}
\label{sec:systems}

The theory of susceptibilities has its roots in statistical mechanics and condensed matter physics. Before specializing to statistical models in machine learning, it is worth understanding these origins.  This will not only provide valuable intuition for how to think about susceptibilities, but in fact many of the methods used in physics may be translated directly, without modification, to the machine learning setting.

\subsection{Configurations and the Boltzmann distribution}

A \emph{system} consists of a space $W$ of \emph{configurations} and a function $H \colon W \to \R$ called the \emph{energy} or \emph{Hamiltonian}. The configurations might be parameterized continuously (for example, the positions and momenta of particles in a gas, or the weights of a neural network) or discretely (the orientations of spins on a lattice, as in a model of a magnet). The Hamiltonian assigns to each configuration a real number measuring its energy.

For reasons that are ultimately grounded in the ergodic behavior of many-body systems -- and that are beyond the scope of these notes -- it is often reasonable to model the probability that the system is in configuration $w \in W$ as proportional to $e^{-\beta H(w)}$, where $\beta > 0$ is the \emph{inverse temperature}. This is the \emph{Boltzmann distribution}:
\begin{equation}\label{eq:boltzmann}
p(w) = \frac{1}{Z}\,e^{-\beta H(w)} \text{ where } Z = \int_W e^{-\beta H(w)}\,dw.
\end{equation}
The normalization constant $Z$ is called the \emph{partition function}. At high temperature ($\beta \to 0$), all configurations are equally likely; at low temperature ($\beta \to \infty$), the distribution concentrates on the energy minima. As we will see in \S\ref{subsec:observables_perturbations} and \S\ref{subsec:fdt_theorem}, the partition function encodes, through its derivatives, essentially all thermodynamic information about the system.

\begin{example}[The Ising model] \label{subsec:ising}
 The simplest nontrivial example is the \emph{Ising model}, which describes a lattice of interacting spins. Place a spin variable $s_i \in \{+1, -1\}$ at each site $i$ of a square grid with $N = L \times L$ sites. A configuration is an assignment of a spin to every site: $w = (s_1, \ldots, s_N) \in \{-1, +1\}^N$. The Hamiltonian is
\begin{equation}\label{eq:ising_H}
H(w) = -J \sum_{i,j \text{ adjacent}} s_i s_j,
\end{equation}
where the sum runs over all pairs of nearest neighbors on the lattice and $J > 0$ is the coupling constant (we set $J = 1$ throughout). The energy is minimized when all spins are aligned ($s_i = s_j$ for all neighbors), giving two ground states: all $+1$ or all $-1$.   
\end{example}

\subsection{Observables and expectation values}
\label{subsec:observables_perturbations}

In physics, an \emph{observable} is a function $\phi: W \to \R$ on the configuration space. Its \emph{expectation value} under the Boltzmann distribution is
\[
\langle \phi \rangle = \int_W \phi(w)\,p(w)\,dw = \frac{1}{Z}\int_W \phi(w)\,e^{-\beta H(w)}\,dw.
\]
The energy $H$ itself is an example of an observable, and its expectation value also arises from the logarithmic derivative of the partition function
\[
\langle H \rangle = -\frac{\partial \log Z}{\partial \beta}.
\]

\begin{example}
In the Ising model there is a natural observable provided by the \emph{magnetization}
\[
M(w) = \sum_{i=1}^N s_i,
\]
which measures the total alignment of the spins. 
\end{example}

Expectation values are the canonical quantities to study: they are sufficient (they determine the Boltzmann distribution for a rich enough class of observables) and they probe structure (different observables reveal different aspects of the system).

\begin{remark}
\label{subsec:callen}
This perspective is articulated with particular clarity by Callen \cite{callen} in the opening chapter of his thermodynamics textbook.  Callen's starting point is an observation about what is actually observable in physics. The microscopic state of a system -- the precise configuration of $10^{23}$ particles, or the exact spin at every lattice site -- fluctuates rapidly and chaotically. What we actually measure in the laboratory are quantities that are \emph{stable}: averages over spatial regions and time intervals that are large compared to the microscopic scales but small compared to the macroscopic scales of interest. Temperature, pressure, magnetization; all of these are averages. The subject of thermodynamics, in Callen's presentation, begins with the recognition that these averaged quantities obey their own laws, independent of the microscopic details that have been averaged away.
\end{remark}

\subsection{Expectation values in the Ising model}
We return now to the Ising model: we may understand its thermodynamics by considering the expectation value of the magnetization. At high temperature ($\beta \to 0$), the Boltzmann distribution is nearly uniform over configurations, the spins point in random directions, and $\langle M \rangle \approx 0$. At low temperature ($\beta \to \infty$), the distribution concentrates on the two ground states, and $\langle |M| \rangle \approx N$.

The transition between these regimes is \emph{sharp}: there is a critical inverse temperature $\beta_c = \tfrac{1}{2}\ln(1 + \sqrt{2}) \approx 0.4407$ at which the system undergoes a \emph{phase transition}. For $\beta < \beta_c$, the system is disordered ($\langle |M| \rangle / N \approx 0$); for $\beta > \beta_c$, long-range order emerges and $\langle |M| \rangle / N > 0$. This transition is visible in \Cref{fig:phase_transition}: at low $\beta$, the lattice is a random mosaic of up and down spins; near $\beta_c$, large clusters begin to form; at high $\beta$, essentially all spins are aligned. The order parameter $\langle |M| \rangle / N$, plotted as a function of $\beta$, shows a sharp crossover near $\beta_c$.

\begin{figure}[ht]
\centering
\includegraphics[width=\textwidth]{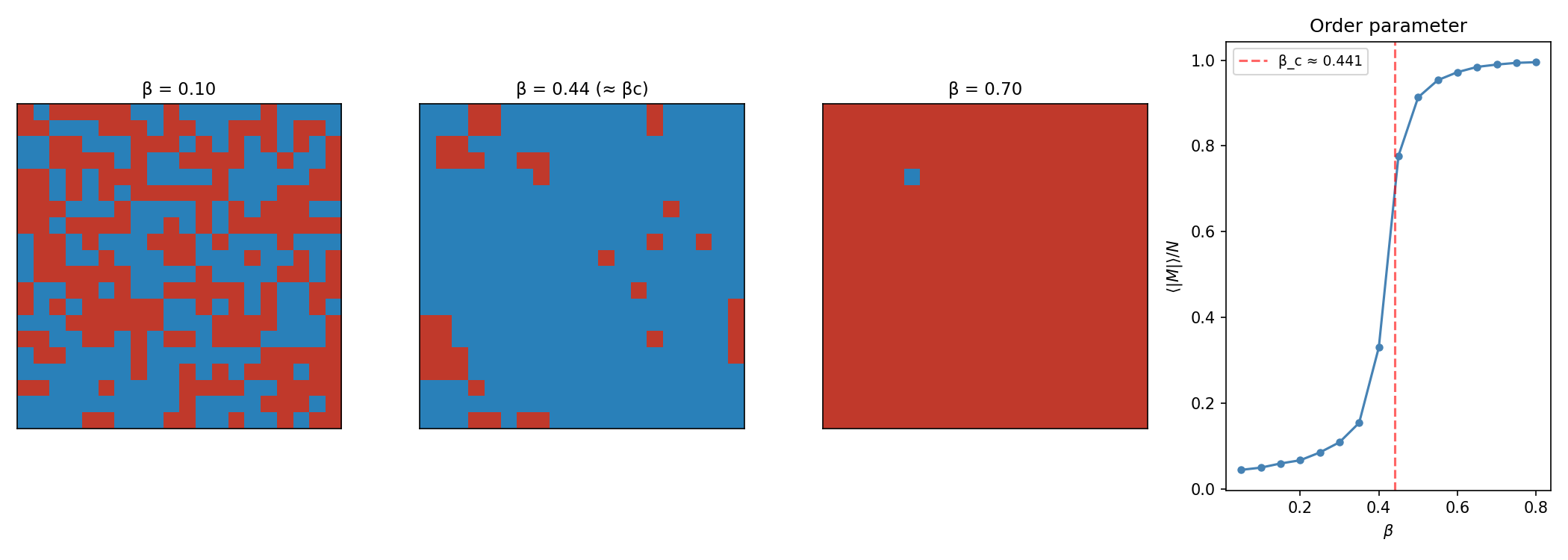}
\caption{The 2D Ising model on a $20 \times 20$ lattice with periodic boundary conditions. \textbf{Left three panels}: sample configurations drawn from the Boltzmann distribution at three values of $\beta$ (blue = spin $+1$, red = spin $-1$). At $\beta = 0.10$ (high temperature), the spins are disordered; near the critical point $\beta_c \approx 0.44$, large correlated domains appear; at $\beta = 0.70$ (low temperature), essentially all spins are aligned. \textbf{Right panel}: the order parameter $\langle |M| \rangle / N$ as a function of $\beta$, estimated from 500 Metropolis--Hastings samples after burn-in. The dashed line marks $\beta_c$. The sharp crossover is the finite-size signature of the phase transition.}
\label{fig:phase_transition}
\end{figure}

This example illustrates the general pattern: by tracking the expectation value of a single observable (the magnetization) as a function of a parameter ($\beta$), we detect a qualitative change in the internal organization of the system from disorder to order without ever inspecting individual spin configurations. The fact that the model exhibits a phase transition is not a property of individual configurations, it is a property of the Boltzmann distribution viewed as a function of the parameter $\beta$.

\subsection{Perturbations and susceptibilities}
In the real world, the Hamiltonian is never truly fixed: coupling constants depend on external conditions, and external fields can be applied. This motivates studying how expectation values change when the Hamiltonian is perturbed. Consider $H^h(w) = H(w) - h \cdot F(w)$, where $F$ is an observable and $h$ controls the perturbation strength.

For a concrete example, in the Ising model with $F = s_p$ (a single spin), the field $h > 0$ biases site $p$ toward $+1$. Because $s_p$ is coupled to its neighbors via $-Js_ps_q$, this bias propagates outward, attenuated at each step. How far it reaches is controlled by the \emph{correlation length} $\xi(\beta)$: at high temperature $\xi$ is small and the perturbation dies out quickly; near $\beta_c$ the correlation length diverges and the perturbation reaches across the system.

The expectation value of any observable $\phi$ becomes a function of $h$:
\[
\langle \phi \rangle_h = \frac{\int \phi(w)\,e^{-\beta(H(w) - hF(w))}\,dw}{\int e^{-\beta(H(w) - hF(w))}\,dw}.
\]
The \emph{susceptibility} is the first-order response:
\begin{equation}\label{eq:chi_physics}
\chi = \frac{\partial}{\partial h}\langle \phi \rangle_h\bigg|_{h=0}.
\end{equation}
As we will see in \Cref{thm:fdt}, this derivative equals a covariance $\beta\,\Cov[\phi, F]$ computed in the \emph{unperturbed} ensemble.  This is known as the \emph{fluctuation--dissipation theorem} \cite{kubo1966}.  It tells us that it is possible to study the \emph{response} to a perturbation purely as a function of expectation values in the unperturbed system.

In statistical physics, the susceptibility is the principal tool for probing the internal structure of a many-body system whose microscopic degrees of freedom cannot be inspected directly.  One controllably perturbs the system by an external source, for instance an applied magnetic field or a change in temperature, and measures the response of a macroscopic observable.  The form of the response is a window onto the system's internal organization.  For example, we have just seen that the magnetic susceptibility of the Ising model -- the response of the magnetization to a uniform external field -- diverges at the critical temperature, signaling the onset of long-range order, and the specific heat encodes the spectrum of low-lying excitations.  We may learn still richer information by studying different observables and perturbations, as we will now demonstrate.

\subsection{Example: measuring the coupling between parts of a system}
\label{subsec:ising_suscept}

To make the idea of susceptibility as a tool for interpretability concrete, we return to the Ising model and ask: \emph{can we detect which part of the lattice a given spin belongs to, just by measuring covariances?}  We will show that the answer is yes not only in theory, but also empirically.  For a more detailed account with illustrations see \cite{IsingBlog}.

\paragraph{Setup.} The Ising Hamiltonian $H = -J\sum_{i,j} s_i s_j$ sums over nearest-neighbor pairs; if we remove a site from the lattice (mask it), we remove all couplings involving that site, so a row or column of masked sites acts as a \emph{wall}, an internal boundary where the coupling constants vanish.  One can think of this as a simple model of a material with inhomogeneous structure -- a crystal with grain boundaries, a composite material with insulating layers, or a region where the interatomic spacing is too large for the exchange interaction to operate.  

For the experiment that follows consider a $20 \times 20$ Ising lattice with periodic boundary conditions and coupling $J = 1$, with a vertical column of spins down the middle masked to form a wall that blocks nearest-neighbor interactions across that column. A single site in this wall is left active, creating a small \emph{gap}. This divides the lattice into a left half ($L$, columns 0--9) and a right half ($R$, columns 11--19). Because of the periodic boundary conditions, columns 0 and 19 are also neighbors, providing a second path between the two halves -- but this path runs the long way around the lattice, so the effective coupling it mediates is weak at moderate $\beta$. We designate a single spin in the left half as the \emph{probe site} (see \Cref{fig:suscept_sweep}, top row).

\paragraph{Observables.} We define three random variables on the configuration space $\{-1,+1\}^N$, all functions of the spin configuration $w = (s_1, \ldots, s_N)$:
\begin{itemize}
\item The \emph{left magnetization} $M_L(w) = \sum_{i \in L} s_i$, the total spin of the left half.
\item The \emph{right magnetization} $M_R(w) = \sum_{i \in R} s_i$, the total spin of the right half.
\item The \emph{probe spin} $s_p(w)$, the value of the spin at the probe site.
\end{itemize}
We compute the covariances
\[
\chi_L = \Cov_\beta[M_L, s_p], \qquad \chi_R = \Cov_\beta[M_R, s_p],
\]
where $\Cov_\beta[f, g] = \langle fg \rangle_\beta - \langle f \rangle_\beta \langle g \rangle_\beta$ denotes the covariance under the Boltzmann distribution at inverse temperature $\beta$.

\paragraph{Connection to susceptibilities.} These covariances have a precise interpretation as susceptibilities. Consider perturbing the Hamiltonian by applying a small external field $h$ at the probe site alone:
\[
H^h(w) = H(w) - h \cdot s_p.
\]
The magnetization of a region $C$ under the perturbed distribution is $\langle M_C \rangle_h$. By the fluctuation--dissipation theorem (\Cref{thm:fdt}):
\begin{equation}\label{eq:ising_fdt}
\frac{\partial}{\partial h}\langle M_C \rangle_h \bigg|_{h=0} = \beta\,\Cov_\beta[M_C, s_p] = \beta\,\chi_C.
\end{equation}
The covariance $\chi_C$ measures how the magnetization of $C$ would respond to an external field at the probe site, computed entirely from thermal fluctuations in the \emph{unperturbed} system.

\paragraph{Results.} In \Cref{fig:suscept_sweep} we show the result of this experiment across a range of inverse temperatures. At each $\beta$, we draw 20{,}000 samples from the Boltzmann distribution via the Metropolis--Hastings algorithm and estimate $\chi_L$ and $\chi_R$ as sample covariances.

\begin{figure}[ht]
\centering
\includegraphics[width=\textwidth]{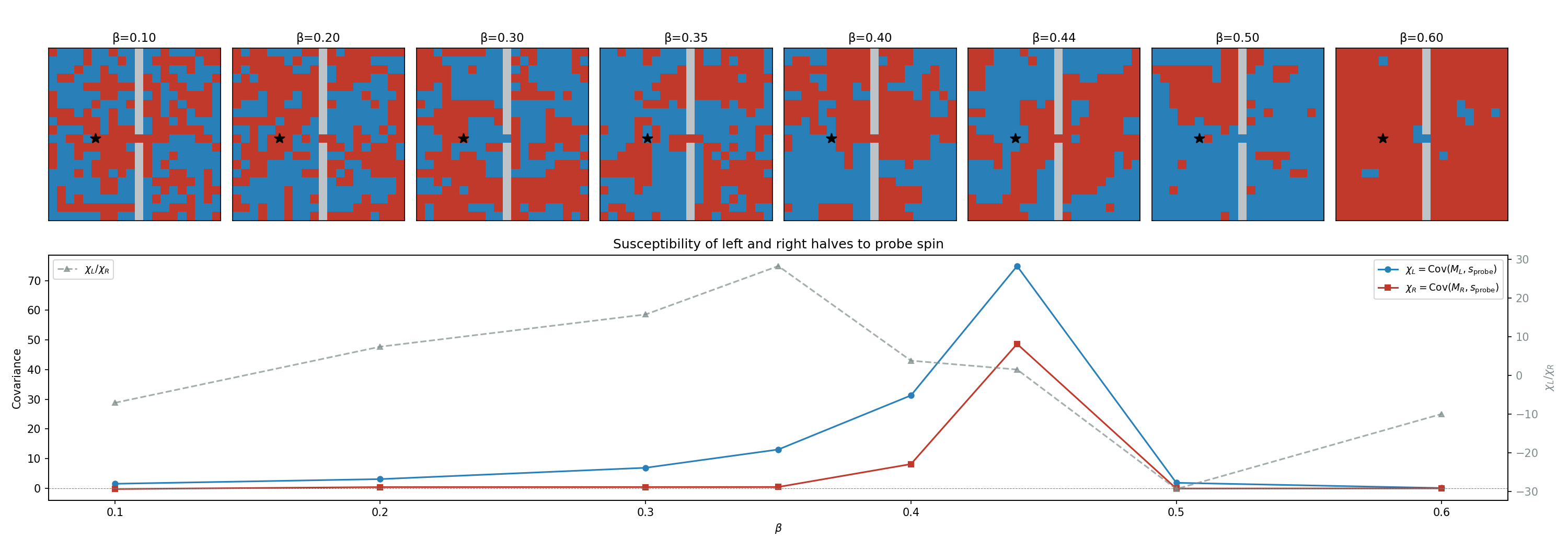}
\caption{Susceptibility of left and right halves to a probe spin in the left half, as a function of $\beta$, on a $20 \times 20$ Ising lattice with a wall separating the two halves (gap at one site). \textbf{Top row}: sample configurations at each $\beta$ (blue = $+1$, red = $-1$, gray = masked wall, $\star$ = probe site). \textbf{Bottom left}: the raw covariances $\chi_L = \Cov(M_L, s_p)$ and $\chi_R = \Cov(M_R, s_p)$. \textbf{Bottom right}: the ratio $\chi_L / \chi_R$ (plotted only where $\chi_R$ is meaningfully nonzero). The ratio peaks around $\beta \approx 0.35$, where correlations within the left half are strong but the wall still blocks coupling to the right half, and collapses toward $1$ near $\beta_c \approx 0.44$ as the diverging correlation length renders the wall ineffective.}
\label{fig:suscept_sweep}
\end{figure}

The behavior differs across three regimes. At \textbf{high temperature} ($\beta \lesssim 0.2$), both $\chi_L$ and $\chi_R$ are small (the spins are nearly independent). At \textbf{intermediate temperature} ($\beta \approx 0.3$--$0.4$), $\chi_L$ grows rapidly while $\chi_R$ remains small: the wall blocks correlations from reaching the right half, and the ratio $\chi_L/\chi_R$ peaks above 25. \textbf{Near criticality} ($\beta \approx \beta_c$), the diverging correlation length allows long-range correlations through the gap, both covariances become large, and the ratio drops toward 1. Above $\beta_c$, the spins are nearly frozen in our simulations and both covariances collapse.\footnote{The collapse above $\beta_c$ is a property of the simulation (the Metropolis chain becomes trapped in one phase sector), not of the equilibrium Gibbs measure (which is a mixture of phases with large covariances). This parallels the neural network setting, where we work with a localized posterior within a single basin rather than the full global posterior.}

This experiment illustrates the central idea that susceptibilities will play in the neural network setting: \emph{the covariance between a localized probe and a regional observable detects the coupling between the probe and the region}. When the system has structure (the wall), this coupling differs between regions, and the susceptibility reveals which region the probe belongs to -- without any direct examination of the lattice geometry.

\subsection{Example: the response matrix}
\label{subsec:response_matrix}

We can push this idea further by considering multiple probes and multiple regions simultaneously, assembling the pairwise covariances into a \emph{response matrix}.

\paragraph{Setup.} We use a $20 \times 20$ lattice with boundary walls (all edge sites masked, eliminating periodic wraparound effects) and a single internal wall: a horizontal strip of masked sites at row~10, spanning the right half of the lattice (columns~10--18). This divides the lattice into three regions (\Cref{fig:response_matrix}, left panel):
\begin{itemize}
\item \textbf{Region $A$} (blue): the left half, columns~1--9, all rows. This region is open -- it has no internal walls and shares an unobstructed boundary with both $B$ and~$C$.
\item \textbf{Region $B$} (orange): the top-right, columns~10--18, rows~1--9.
\item \textbf{Region $C$} (red): the bottom-right, columns~10--18, rows~11--18.
\end{itemize}
Regions $B$ and $C$ are separated by the wall and have no direct coupling: a spin in $B$ and a spin in $C$ are never nearest neighbors. Any interaction between $B$ and $C$ must be mediated through~$A$, which is open to both.

We place two probe spins in each region (six probes total, marked as stars in \Cref{fig:response_matrix}), and define the magnetization $M_\alpha = \sum_{i \in \alpha} s_i$ for each region $\alpha \in \{A, B, C\}$. The \emph{response matrix} is the $6 \times 3$ matrix
\[
\chi_{p\alpha} = \Cov_\beta[s_p, M_\alpha],
\]
with rows indexed by probes $p$ and columns by regions $\alpha$.

\paragraph{Results.} In \Cref{fig:response_matrix} we show the response matrix computed at $\beta = 0.44$ (near the critical temperature, where correlations are long-ranged) from 20{,}000 Metropolis--Hastings samples. The matrix has a clear block structure that reflects the lattice geometry:

\begin{figure}[ht]
\centering
\includegraphics[width=\textwidth]{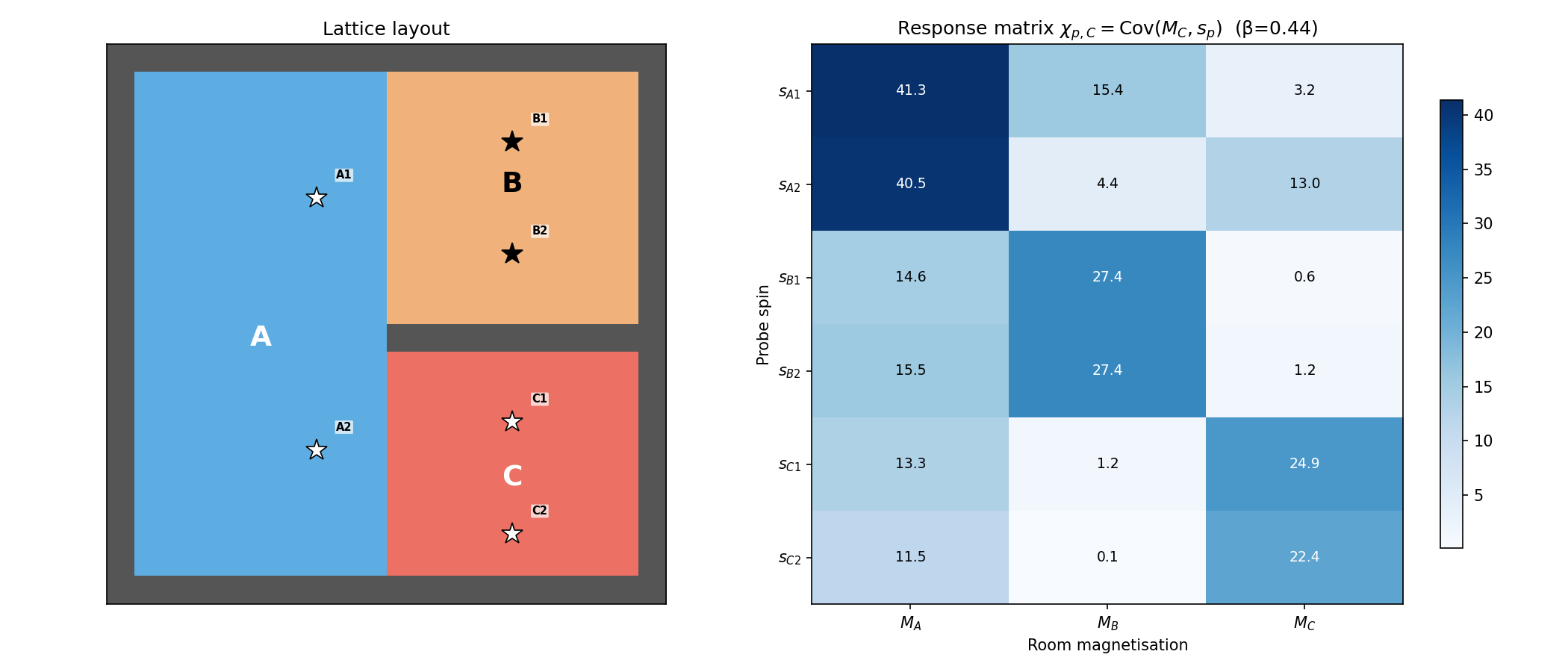}
\caption{\textbf{Left}: lattice layout showing three regions ($A$, $B$, $C$) separated by a horizontal wall on the right half. Stars mark probe spin positions. \textbf{Right}: the response matrix $\chi_{p\alpha} = \Cov_\beta[s_p, M_\alpha]$ at $\beta = 0.44$.}
\label{fig:response_matrix}
\end{figure}

\begin{itemize}
\item Every probe couples most strongly to the magnetization of its own region ($\chi_{p\alpha} \approx 25$--$41$). This is the basic signal: the susceptibility identifies which region each probe belongs to.

\item Probe $A_1$, positioned in the top half of $A$, has $\chi_{A_1, B} \approx 15$ but $\chi_{A_1, C} \approx 3$: it couples much more strongly to $B$ (which it is close to) than to $C$ (which is on the far side of the wall). Probe $A_2$, in the bottom half, shows the opposite pattern: $\chi_{A_2, C} \approx 13$ and $\chi_{A_2, B} \approx 4$. The two $A$ probes, despite belonging to the same region, have different susceptibility profiles because they ``see'' different parts of the lattice.

\item Probes in $B$ have $\chi_{p, C} \approx 0$--$1$, and probes in $C$ have $\chi_{p, B} \approx 0$--$1$. The wall between $B$ and $C$ is almost completely effective at blocking correlations. Both $B$ and $C$ probes couple moderately to $A$ ($\chi \approx 11$--$16$), since $A$ is the open region through which any $B$--$C$ interaction must be mediated.
\end{itemize}

The response matrix recovers the internal geometry from covariance measurements alone: three regions, two separated by a barrier, with a third mediating between them. The asymmetry between the $A$ probes reveals not just the room structure but the spatial layout. This is the prototype for susceptibilities in neural networks, where the ``regions'' become model components, the ``probes'' become data points, and the response matrix becomes the structural susceptibility matrix of \cite{lang2}.

\subsection{From physics to neural networks}
\label{subsec:physics_to_nn}

The structural inference program \cite{lang1,lang2,lang2.5,lang3} applies this framework to neural networks. The analogy is:

\medskip
\begin{center}
\begin{tabular}{ll}
\textbf{Physics} & \textbf{Neural networks} \\
\hline
Configuration space $W$ & Parameter space $W$ \\
Hamiltonian $H(w)$ & (Population) loss $L(w)$ \\
Boltzmann distribution $e^{-\beta H}$ & Annealed posterior $e^{-n\beta L}$ \\
External field perturbation & Data distribution perturbation \\
Observable $\phi$ (e.g.\ magnetization) & Observable $\phi$ (per-sample loss or $\phi_C$) \\
Susceptibility $\partial_h\langle \phi \rangle$ & Susceptibility $\chi(\phi; q')$ \\
\end{tabular}
\end{center}
\medskip

\noindent In \cite{lang2}, susceptibilities are defined and used to identify the roles of attention heads in small transformers: which heads are responsible for which patterns in the data. In \cite{lang3}, the per-token susceptibility vectors are clustered, yielding hundreds of interpretable groups -- the ``spectral lines'' of the model -- and a decomposition theorem shows that these clusters arise from the mode structure of the data distribution. The \emph{patterning} program \cite{patterning} inverts the framework: given a desired change in internal structure, it uses the pseudo-inverse of the susceptibility matrix to compute the optimal data perturbation.

\section{Susceptibilities in machine learning} \label{sec:definition}
We now specialize the general framework introduced in the context of statistical physics to the setting of machine learning models such as neural networks. Susceptibilities measure the first-order response of a model to perturbations of the data distribution. The general definition applies to any observable; different choices of observable yield the influence matrix and the structural susceptibility matrix.

\subsection{Setup and notation}
\label{sec:setup}
We work with a model-truth-prior triplet $(p(y|x,w), q(x,y), \pi(w))$, where $p(y|x,w)$ is a parametric model with parameters $w \in W \subseteq \R^d$, $q(x,y) = q(x)q(y|x)$ is the true data-generating distribution, and $\pi(w)$ is a prior density on $W$.

\begin{definition}[Losses]\label{def:losses}
The \emph{per-sample loss} at a data point $z = (x,y)$ is $\ell_z(w) = -\log p(y|x,w)$.  The \emph{population loss} is
\[
L(w) = \E_{z \sim q}\big[\ell_z(w)\big] = -\int q(x,y) \log p(y|x,w) \, dx \, dy.
\]
The \emph{empirical loss} on a dataset $D_n = \{(x_i, y_i)\}_{i=1}^n$ drawn i.i.d.\ from $q$ is
\[
L_n(w) = \frac{1}{n} \sum_{i=1}^n \ell_{z_i}(w).
\]
\end{definition}

\begin{definition}[Population and empirical posteriors]\label{def:posteriors}
Fix an inverse temperature $\beta > 0$ and a prior density $\pi(w)$ on $W$. The \emph{population Gibbs posterior} is
\begin{equation}\label{eq:pop_posterior}
\Pi^{\mathrm{pop}}_\beta(dw) = \frac{1}{Z^{\mathrm{pop}}_\beta} \exp\!\big\{-n\beta L(w)\big\} \, \pi(w) \, dw, \qquad Z^{\mathrm{pop}}_\beta = \int \exp\!\big\{-n\beta L(w)\big\} \, \pi(w) \, dw.
\end{equation}
The \emph{empirical posterior} (conditional on a dataset $D_n$) is
\begin{equation}\label{eq:emp_posterior}
\Pi^{\mathrm{emp}}_\beta(dw \mid D_n) = \frac{1}{Z^{\mathrm{emp}}_\beta} \exp\!\big\{-n\beta L_n(w)\big\} \, \pi(w) \, dw, \qquad Z^{\mathrm{emp}}_\beta = \int \exp\!\big\{-n\beta L_n(w)\big\} \, \pi(w) \, dw.
\end{equation}
\end{definition}

Susceptibilities are \emph{defined} using the population posterior; the empirical posterior is what we can \emph{compute}. Going forward we will use undecorated expectation values $\langle \phi \rangle$ and covariances $\Cov[\phi, \psi]$ refer to the population posterior.  Empirical quantities are marked with the superscript $\mathrm{emp}$, as in $\langle \phi \rangle^{\mathrm{emp}}$ and $\Cov^{\mathrm{emp}}[\phi, \psi]$.

\begin{remark}[Connection to the usual Bayesian posterior]
At $\beta = 1$ the empirical posterior \eqref{eq:emp_posterior} coincides with the Bayesian posterior $p(w \mid D_n) \propto p(D_n \mid w) \pi(w)$ of Bayes' theorem: since $\ell_z(w) = -\log p(y \mid x, w)$, the factor $\exp(-n L_n(w))$ equals $\prod_{i=1}^n p(y_i \mid x_i, w) = p(D_n \mid w)$.  For general $\beta > 0$ we have $\exp(-n\beta L_n(w)) = p(D_n \mid w)^\beta$, so that $\Pi^{\mathrm{emp}}_\beta$ is the \emph{tempered} variant of the standard Bayesian posterior in which the likelihood is raised to a power.
\end{remark}

\begin{definition}[Components]
A \emph{component} $C$ is a factor of parameter space. Decompose $W = U \times C$ as a product and let $w = (u, v)$ accordingly, so that $u$ denotes the parameters outside $C$ and $v$ the parameters in $C$. Given a trained parameter $w^* = (u^*, v^*)$, we think of $C$ as a part of the model whose role we wish to probe.
\end{definition}

In practice, when studying neural networks, the components are the natural building blocks of the architecture: individual attention heads, MLP layers, or layer-norm parameters.

\begin{remark}[The reference parameter $w^*$]
The theoretical development, particularly the connection to the RLCT and the resolution of singularities in \Cref{subsec:singular}, requires $w^*$ to be a local minimizer of the population loss $L$, so that $K = L - L(w^*) \ge 0$ near $w^*$.  In practice, $w^*$ is the endpoint of a training procedure rather than an exact minimizer; the gap between these is discussed in \Cref{subsec:sgd_gap}.
\end{remark}

Recall from \Cref{subsec:observables_perturbations} that an \emph{observable} is a function $\phi \colon W \to \R$ on parameter space.

\begin{definition} \label{def:struc_coord}
Given a collection of observables $\phi_1, \ldots, \phi_H$, the associated \emph{structural coordinates} of the data distribution $q$ are defined to be the vector of posterior expectations 
\[(\langle \phi_1 \rangle, \ldots, \langle \phi_H \rangle) \in \R^H.\] They summarize how the model's internal structure (as probed by the observables $\phi_j$) depends on the data. Susceptibilities measure how these coordinates respond to perturbations of $q$.
\end{definition}

\paragraph{Notation summary.} The following table collects the main symbols used in the paper.

\medskip
\begin{center}
\small
\begin{tabular}{ll}
\hline
\textbf{Symbol} & \textbf{Meaning} \\
\hline
$W$ & Parameter space \\
$w^*$ & Estimation parameter \\
$q$, $q'$ & Training (population) and probe data distributions \\
$\ell_z$ & Per-sample loss at $z = (x,y)$: $\ell_z(w) = -\log p(y|x,w)$ \\
$f_z = \ell_z - L$ & Centered per-sample loss \\
$\tilde{f}_z = \ell_z - \ell_z(w^*)$ & Log density ratio (\cite{singfluence2} convention) \\
$L = \E_q[\ell_z]$ & Population loss \\
$L_n$ & Empirical loss on dataset $D_n$ \\
$K = L - L(w^*)$ & Excess population loss (KL divergence at true params) \\
$\pi$ & Prior density on $W$ \\
$\Pi^{\mathrm{pop}}_\beta$, $\Pi^{\mathrm{emp}}_\beta$ & Population and empirical Gibbs posteriors \\
$\langle \phi \rangle$, $\Cov[\phi,\psi]$ & Population posterior expectation and covariance \\
$\langle \phi \rangle^{\mathrm{emp}}$, $\Cov^{\mathrm{emp}}$ & Empirical posterior expectation and covariance \\
$\chi(\phi; q')$ & Susceptibility of observable $\phi$ to perturbation $q \to q'$ \\
$\chi_z(\phi)$ & Per-sample susceptibility: $-\Cov[\phi, f_z]$ \\
$\phi_C = \delta(u-u^*) K$ & Component observable \\
$\tilde{\chi}^C_z$ & Renormalized component susceptibility (estimated in practice) \\
$X \in \R^{H \times D}$ & Structural susceptibility matrix \\
$I \in \R^{M \times D}$ & Influence matrix \\
$\mathcal{K}_\Pi(z,z')$ & Loss kernel: $\Cov_{w \sim \Pi}(\tilde{f}_z, \tilde{f}_{z'})$ \\
$W = U \times C$ & Decomposition into complement $\times$ component \\
\hline
\end{tabular}
\end{center}
\medskip

\subsection{Data perturbations}

Let $q$ be the training distribution and $q'$ a probe distribution. For $h \in [0,1]$, define the mixed distribution
\[
q_h = (1 - h) q + h q'.
\]
The population loss under the perturbed distribution is $L^h(w) = \E_{q_h}[\ell_z(w)]$, and we write $\Delta L(w) = \frac{\partial}{\partial h} L^h(w)\big|_{h=0} = L^{q'}(w) - L(w)$, where $L^{q'}(w) = \E_{q'}[\ell_z(w)]$.

The mixture path realizes a first-order perturbation of the data distribution.  The susceptibility defined in the next subsection depends only on the tangent vector $\eta = q' - q$ at $h = 0$ -- a signed density on the data space integrating to zero -- and is linear in $\eta$.  Linearity extends the definition to any such signed density, even those for which the affine path $q + h\eta$ does not remain non-negative for any $h > 0$.  A careful discussion of this point and of the Fr\'echet manifold structure of the space of probability density functions is given in \cite[Remark~2.15]{theory_notes}.

\subsection{Susceptibilities and the fluctuation--dissipation theorem}
\label{subsec:fdt_theorem}

Recall from \Cref{subsec:observables_perturbations} that in the statistical physics setting we remarked that the response of a Gibbs expectation to a weak external field coupled to an observable $F$ equals a covariance in the unperturbed ensemble: $\partial_h \langle \phi \rangle_h|_{h=0} = \beta\,\Cov[\phi, F]$.  This is the fluctuation--dissipation theorem \cite{kubo1966}.  We now establish the analogous statement in the machine-learning setting, with the external field replaced by a perturbation of the data distribution.

\begin{definition}[Susceptibility]\label{def:susceptibility}
Let $\phi$ be a generalized observable on $W$: either an ordinary function $\phi \colon W \to \R$ or, more generally, a distribution (such as a Dirac delta localized to a submanifold) that is well-defined when integrated against the smooth posterior density. The \emph{susceptibility} of $\phi$ to the perturbation $q \to q'$ at inverse temperature $\beta$ is
\[
\chi(\phi; q') = \frac{1}{n\beta} \left.\frac{\partial}{\partial h}\,\langle \phi \rangle_{h}\right|_{h=0},
\]
where $\langle \phi \rangle_{h}$ denotes the expectation under the population Gibbs posterior built from $L^h$ in place of $L$.
\end{definition}

\begin{theorem}[Fluctuation--dissipation; {\cite[Lemma~1]{lang2}}]\label{thm:fdt}
The susceptibility equals the negative covariance under the population posterior:
\begin{equation}\label{eq:chi_cov}
\chi(\phi; q') = -\Cov\!\big[\phi,\, \Delta L\big].
\end{equation}
\end{theorem}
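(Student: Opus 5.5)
The approach is to differentiate the ratio defining $\langle \phi \rangle_h$ directly, exactly as for the physics version of the fluctuation--dissipation theorem. Write the perturbed posterior as
\[
\langle \phi \rangle_h = \frac{N(h)}{Z(h)}, \qquad N(h) = \int \phi(w)\, e^{-n\beta L^h(w)}\pi(w)\,dw, \qquad Z(h) = \int e^{-n\beta L^h(w)}\pi(w)\,dw.
\]
Since $q_h = (1-h)q + hq'$ and $\ell_z$ does not depend on $h$, linearity of expectation gives the exact identity $L^h(w) = L(w) + h\,\Delta L(w)$, where $\Delta L = L^{q'} - L$. Hence $\partial_h e^{-n\beta L^h} = -n\beta\,\Delta L\, e^{-n\beta L^h}$, and this holds for every $h$, not only at $h = 0$.

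The next step is to differentiate under the integral sign. This gives $N'(0) = -n\beta \int \phi\,\Delta L\, e^{-n\beta L}\pi\,dw$ and $Z'(0) = -n\beta\int \Delta L\, e^{-n\beta L}\pi\,dw$. The quotient rule then yields
\[
\partial_h \langle \phi\rangle_h\big|_{h=0} = \frac{N'(0)}{Z(0)} - \frac{N(0)Z'(0)}{Z(0)^2} = -n\beta\big(\langle \phi\,\Delta L\rangle - \langle\phi\rangle\langle \Delta L\rangle\big) = -n\beta\,\Cov[\phi,\Delta L].
\]
Dividing by $n\beta$ as in \Cref{def:susceptibility} gives \eqref{eq:chi_cov}. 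Equivalently, one can note that $\partial_h \log Z = -n\beta\langle\Delta L\rangle$, so the posterior density's log-derivative is $-n\beta(\Delta L - \langle \Delta L\rangle)$; integrating this against $\phi$ gives the covariance directly.

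The main obstacle is justifying differentiation under the integral. For ordinary $\phi$, I would assume integrability conditions such as $\phi\,\Delta L\,e^{-n\beta L}\pi$ and $\Delta L\,e^{-n\beta L}\pi$ being in $L^1$, together with local boundedness of $\Delta L$ on the support of the prior (for example, compact $W$ with continuous losses). Then, for $|h|$ small, the difference quotients are dominated by $|\Delta L|\,e^{n\beta|h||\Delta L|}e^{-n\beta L}\pi$ up to a factor of $|\phi|$, and dominated convergence applies.

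For generalized observables $\phi$, such as the Dirac-delta component observables, I would interpret $\langle \phi\rangle_h$ as the pairing of the distribution $\phi$ with the smooth density $e^{-n\beta L^h}\pi/Z(h)$. The $h$-derivative commutes with this continuous linear pairing provided $h \mapsto e^{-n\beta L^h}\pi$ is differentiable in the relevant test-function topology. This holds when $L$, $\Delta L$ and $\pi$ are smooth, because the dependence on $h$ is through the smooth factor $e^{-n\beta h\Delta L}$. The same computation then goes through verbatim, with $\langle \phi\,\Delta L\rangle$ understood as the pairing of $\phi$ with $\Delta L$ times the density.
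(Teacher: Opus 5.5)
Your proposal is correct and follows the paper's proof sketch essentially step for step: use $\partial_h L^h = \Delta L$, apply the quotient rule to $N(h)/Z(h)$ at $h=0$, and divide by $n\beta$. The regularity discussion you add goes beyond the paper, which leaves these issues implicit; this covers dominated convergence and the pairing interpretation for distributional observables such as $\phi_C$. One small correction there: the domination step needs $\Delta L$ bounded, or an exponential moment condition, rather than merely locally bounded, which is exactly what your compact-$W$ example supplies.
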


\begin{proof}[Proof sketch]
Differentiate the Gibbs expectation
\[
\langle \phi \rangle_h = \frac{1}{Z(h)}\int \phi(w)\,e^{-n\beta L^h(w)}\,\pi(w)\,dw, \qquad Z(h) = \int e^{-n\beta L^h(w)}\,\pi(w)\,dw
\]
with respect to $h$.  The derivative of $e^{-n\beta L^h}$ brings down $-n\beta\,\partial_h L^h = -n\beta\,\Delta L$ (with $\Delta L = L^{q'} - L$), and the derivative of $1/Z(h)$ produces $n\beta\,\langle \Delta L \rangle_h / Z(h)$.  At $h = 0$ these combine to give
\[
\frac{\partial}{\partial h}\langle \phi \rangle_h\bigg|_{h=0} = -n\beta\big(\langle \phi\,\Delta L \rangle - \langle \phi \rangle\langle \Delta L \rangle\big) = -n\beta\,\Cov[\phi, \Delta L],
\]
and dividing by $n\beta$ gives $\chi(\phi; q') = -\Cov[\phi, \Delta L]$.
\end{proof}

This identity has a long history in Bayesian statistics in parallel to its development in statistical physics: it appears as the \emph{local case sensitivity} of Gustafson \cite{gustafson1996}, in the variational Bayes literature \cite{giordano2018, giordano2024, iba2025}, and in the neural network setting as the \emph{Bayesian influence function} of \cite{singfluence1}.  The formulation we adopt here, for general perturbations and distributional observables, is that of \cite{lang2}.

The covariance form \eqref{eq:chi_cov} is what makes susceptibilities computable: it can be \emph{estimated} by replacing $\Cov$ with $\Cov^{\mathrm{emp}}$ and then using a Monte Carlo method to sample from a distribution approximating the empirical posterior. We discuss the replacement of the population covariance by the empirical covariance in \Cref{sec:pop_emp}, and we discuss the sampling process in \Cref{sec:sgld}.

\subsection{Per-sample susceptibilities and the density interpretation}
\label{subsec:per_sample_susceptibilities}

In practice, we rarely have a single probe distribution $q'$ in mind; rather, we wish to understand the relationship between an observable and every part of the data simultaneously.  We therefore specialize to \emph{per-sample} susceptibilities, where the perturbation direction is a point mass.  These provide a \emph{universal} set of first-order perturbation directions, in the sense that the susceptibility to any probe distribution $q'$ is recovered from them by integration.  The following definition works for any observable $\phi$.

\begin{definition}[Per-sample susceptibility]\label{def:per_sample}
The \emph{per-sample susceptibility} of observable $\phi$ for the data point $(x,y)$ is
\begin{equation}\label{eq:chi_xy}
\chi_z(\phi) = -\Cov\!\big[\phi,\, \ell_z - L\big].
\end{equation}
\end{definition}

The subtraction of $L$ ensures the susceptibilities are centered, meaning that
\[\E_{z \sim q}[\chi_z(\phi)] = 0,\]
since $\E_q[\ell_z - L] = 0$ by definition of $L$.  This centering makes the per-sample susceptibilities into a \emph{density} from which the susceptibility for any perturbation can be recovered \cite[Appendix~B.2]{lang3}.  Since $\Delta L = \int (q' - q)(z)\,\ell_z\,dz$, the covariance form gives
\begin{equation}\label{eq:density_clean}
\chi(\phi; q') = \int q'(z)\,\chi_z(\phi)\,dz.
\end{equation}
That is, the susceptibility for \emph{any} probe distribution $q'$ is the $q'$-weighted average of per-sample susceptibilities. Computing them amounts to computing the susceptibility for all possible perturbations at once.

\paragraph{Point perturbations.} For $q' = (1 - \varepsilon)q + \varepsilon\,\delta_{z_0}$, the centering gives $\chi(\phi; q') = \varepsilon\,\chi_{z_0}(\phi)$: the per-sample susceptibility is, up to the perturbation strength, the response to upweighting a single data point.

\subsection{Examples of observables}
\label{subsec:natural_observables}

The susceptibility is defined for any generalized observable $\phi$; the choice of $\phi$ determines what aspect of the model we probe.  In this section we describe three families of observables that arise particularly naturally, and for two of them -- per-sample losses and component-localized losses -- we introduce the associated susceptibility \emph{matrix} obtained by pairing the observable family against a collection of per-sample perturbations.

\paragraph{Per-sample losses.} For a data point $z = (x,y)$, the per-sample loss $\ell_z(w) = -\log p(y|x,w)$ is an ordinary smooth function on $W$ (called the \emph{per-token loss} in the language model setting, where $z$ is a token-in-context). The \emph{centered per-sample loss} is
\[
f_z(w) = \ell_z(w) - L(w),
\]
the deviation of the loss on $z$ from the population mean. Centered per-sample losses integrate to zero under $q$ (by definition of $L$) and play a central role in both the density interpretation of susceptibilities (\Cref{def:per_sample}) and the loss kernel (\Cref{def:loss_kernel}).

The susceptibility of $\phi = \ell_z$ is $\chi(\ell_z; q') = -\Cov[\ell_z, \Delta L]$ and measures how the model's expected loss on $z$ responds to a shift in the data distribution. Under the empirical posterior, this is the \emph{Bayesian influence function} (BIF) of \cite{singfluence1}: the BIF of a training sample $z_i$ on a query $z_j$ is $\mathrm{BIF}(z_i, z_j) = -\beta\,\Cov^{\mathrm{emp}}[\ell_{z_j}, \ell_{z_i}]$ (up to centering conventions), the empirical version of the susceptibility with $\phi = \ell_{z_j}$ and $q' = \delta_{z_i}$.

\begin{definition}[Influence matrix]\label{def:influence_matrix}
For data points $z_1, \ldots, z_D$ and query points $z'_1, \ldots, z'_M$, the \emph{influence matrix} $I \in \R^{M \times D}$ has entries
\[
I_{mk} = \chi_{z_k}(f_{z'_m}) = -\Cov[f_{z'_m},\, f_{z_k}],
\]
where $f_z = \ell_z - L$ is the centered per-sample loss. Rows are indexed by queries, columns by data points. Under the empirical posterior this is the Bayesian influence function of \cite{singfluence1}; the population version is the loss kernel $\mathcal{K}(z'_m, z_k)$ of \cite{singfluence2}.  The influence matrix encodes the functional coupling between data points.
\end{definition}

\paragraph{The population loss.} The population loss $\phi = L(w) - L(w^*) = \E_q[\ell_z(w)] - L(w^*)$ is the observable whose expectation value (times $n\beta$) recovers the local learning coefficient $\hat{\lambda}(w^*)$ -- a singular generalization of the regular-model effective dimension $d/2$, defined and discussed in \Cref{subsec:rlct}. Its susceptibility measures the response of the overall model complexity to data perturbations.

\paragraph{Component observables.} To probe a specific component $C$ of the model, we use an analog of the population loss that only varies along the component.

\begin{definition}[Component loss]\label{def:phi_C}
Given a component $C$ with product decomposition $W = U \times C$ and reference parameter $w^* = (u^*, v^*)$, define
\begin{equation}\label{eq:phi_C}
\phi_C(w) = \delta(u - u^*)\big[L(w) - L(w^*)\big].
\end{equation}
\end{definition}

The Dirac delta $\delta(u - u^*)$ restricts to the slice where all parameters \emph{outside} $C$ are fixed at their trained values. The factor $L(w) - L(w^*)$ then measures the excess loss due to deviations in $C$. The motivation for this particular construction is discussed further in \Cref{subsec:component_losses}.

\begin{remark}[Distributional observable]
The observable $\phi_C$ is not an ordinary function on $W$: the Dirac delta makes it a distribution, well-defined only through its action inside integrals against smooth densities. Concretely, $\langle \phi_C \rangle$ reduces to a $\dim(C)$-dimensional integral over $v \in C$ with $u$ evaluated at $u^*$. In practice, $\phi_C$ cannot be evaluated at a generic posterior sample $w_t$, since such a sample will not satisfy $u = u^*$. Estimation must instead use a \emph{weight-restricted} sampling process, which samples only the component parameters $v$ while clamping $u = u^*$. This restricted sampling is discussed further in \Cref{sec:pop_emp}.
\end{remark}

\begin{definition}[Structural susceptibility matrix]\label{def:structural_matrix}
For components $C_1, \ldots, C_H$ and data points $z_1, \ldots, z_D$, the \emph{structural susceptibility matrix} $X \in \R^{H \times D}$ has entries
\[
X_{jk} = \chi_{z_k}(\phi_{C_j}) = -\Cov[\phi_{C_j},\, \ell_{z_k} - L].
\]
Rows are indexed by components, columns by data points. This is the matrix studied in \cite{lang2,lang3}: it reveals which components respond to which data patterns.  It is (up to $n\beta$) the Jacobian of the map from data distributions to structural coordinates (\Cref{sec:patterning}).
\end{definition}

\section{Susceptibilities probe geometry}
\label{sec:geometry}

Thus far we have kept the choice of observable whose response to probe very general.  We will now explain a consideration that helps us to pin down ``interesting'' observables, and in particular that motivates the utility of the component-restricted loss observables of \Cref{def:phi_C}.  The starting point is that the central object of our analysis -- the population Gibbs posterior $e^{-tL(w)}\pi(w)\,dw$ (with $t = n\beta$, as in \Cref{sec:setup}), analogous to the Boltzmann distribution from \Cref{sec:systems} -- concentrates at large $t$ near the minimum locus $W_0$ of $L$, so its structure is locally determined by the geometry of the loss near $W_0$.  This geometry ultimately controls the asymptotic behavior of Bayesian learning at large sample size -- the free energy $F_n$ and the Bayes generalization error -- via the singular learning theory of Watanabe \cite{watanabe}.  Susceptibilities are covariances under this posterior: their leading asymptotic behavior pairs the partial derivatives of the observables against those of the loss.  Different observables therefore expose different terms in the Taylor expansion of $L$, and the depth at which an observable's partial derivatives vanish at a point in $W_0$ controls how deep into the Taylor data its susceptibility reaches.  In particular this motivates considering observables that vanish, or whose partial derivatives to a specified order vanish locally in some subspace of the vanishing set $W_0$ of the loss.

We begin by presenting our analysis in the \emph{regular} setting, for which the Hessian $H = D^2 L(w^*)$\footnote{Here and below $D^k f$ denotes the $k^\text{th}$ total derivative tensor of $f$ at a point, the symmetric $k$-tensor with components $(D^k f)_{i_1 \cdots i_k} = \partial_{i_1}\cdots\partial_{i_k} f$.} is positive-definite.  In this setting the local geometry of $L$ is captured by its Taylor coefficients, and the Laplace approximation expresses susceptibilities in terms of them -- the core of \Cref{subsec:expectation_taylor} and \Cref{subsec:susceptibilities_add}.  For applications to neural networks we ultimately need to grapple with the case where the Hessian of $L$ is degenerate (not invertible), which brings in more sophisticated ideas from algebraic geometry: we generalize to this setting in \Cref{subsec:singular}, where Taylor coefficients are replaced by data of a resolution of singularities, and the rule for ``good'' observables -- vanishing along the relevant directions of the vanishing set $W_0$ -- generalizes accordingly.

\subsection{The regular case}

We're going to begin by working through a simplified setting in detail.  Suppose that $L \colon W \to \R$ has a single global minimum at $w^*$; without loss of generality we may assume $L(w^*) = 0$ by a constant translation. We will assume that $L$ is an analytic function, so its germ at $w^*$ is determined by its Taylor coefficients.  In other words, knowing all partial derivatives $\partial^\alpha L(w^*)$ completely determines the local behavior of $L$ in a neighborhood of $w^*$.

Positive-definiteness implies in particular that $w^*$ is an isolated minimum.  As we will discuss in \Cref{subsec:singular} this is a strong condition that does not generally hold in practice -- in generic examples coming from neural networks, for instance, the Hessian is non-invertible.  It will, however, be useful for doing detailed calculations to build intuition about the general case.  Under our positive-definiteness assumption the \emph{Morse lemma} tells us that we can choose smooth local coordinates near $w^*$ in which $L$ takes the standard form
\[
L(w) = \frac{1}{2}\sum_{i=1}^d \lambda_i (w_i - w^*_i)^2,
\]
where $\lambda_1, \ldots, \lambda_d > 0$ are the eigenvalues of the Hessian $H$.  In particular the Hessian completely determines the local geometry of $L$ at $w^*$, and therefore the local behavior of the Gibbs posterior at large $t$.  Our task in the rest of this subsection is to make this dependence explicit, by extracting the leading $1/t$ asymptotics of expectation values and covariances in terms of the Taylor coefficients of $L$ at $w^*$.

\begin{remark}
The fact that the Hessian completely determines the local geometry of $L$ in the regular case might seem to make probing higher derivatives like $T = D^3 L(w^*)$ unnecessary -- a change of coordinates kills them, after all.  The reason we'll care anyway is that the regular-case calculation is instructive for the more general \emph{singular} case (\Cref{subsec:singular}).  When $H$ is degenerate the higher order Taylor coefficients provide essential local information about the loss, and the susceptibility formulas we derive here are instructive for the sorts of techniques one would apply in the general case.
\end{remark}

\subsubsection{Expectation values}
\label{subsec:expectation_taylor}

We'll start by analyzing -- in the regular case -- the first non-trivial term in the asymptotic expansions of expectation values and covariances under the Gibbs posterior, and read off the resulting dependence on the Taylor coefficients of $L$ at $w^*$.  The key tool is the \emph{Laplace approximation} \cite{bender_orszag,tierney_kadane}, which Laplace introduced in the context of Bayesian inference \cite{laplace1774}. As we will see, the resulting formulas already exhibit a phenomenon that organizes the rest of the section: observables that vanish more deeply at $w^*$ probe deeper Taylor data of $L$.

Consider the expectation value of a smooth observable $\phi \colon W \to \R$:
\[\langle \phi \rangle_t = \frac{1}{Z(t)}\int_W \phi(w)\,e^{-tL(w)}\,\pi(w)\,dw,\]
where
\[Z(t) = \int_W e^{-tL(w)}\,\pi(w)\,dw\]
is the partition function.  We'll let $\pi = 1$ for simplicity. Recall that $H = D^2 L(w^*)$ is the Hessian of $L$ at the minimum, and write $\Sigma = H^{-1}$, $T = D^3 L(w^*)$ for the \emph{cubic anharmonicity} of $L$, and $(T{:}\Sigma)_i = T_{ijk}\Sigma_{jk}$ for the full contraction of $T$ against $\Sigma$ (using the summation convention).

In the regular case, the Laplace approximation provides an asymptotic expansion of integrals against the Gibbs measure as $t \to \infty$. Applying the Laplace approximation to the partition function gives
\begin{equation}\label{eq:Zt_laplace}
Z(t) \sim (2\pi)^{d/2}\, t^{-d/2}\, (\det \Sigma)^{1/2};
\end{equation}
applying it to $\int \phi(w)e^{-tL(w)}\,dw$ and dividing by \eqref{eq:Zt_laplace} gives an expansion of $\langle \phi \rangle_t$ in powers of $1/t$.  The leading term is $\phi(w^*)$: the Gibbs measure concentrates at $w^*$ and the expectation converges to the pointwise value.  This carries no information about the geometry of $L$, so to probe further we restrict to observables for which $\phi(w^*) = 0$.  This recovers the order $t^{-1}$ term.

\begin{lemma}\label{lem:laplace_exp}
Let $\phi$ be a smooth observable with $\phi(w^*) = 0$. Then
\begin{equation}\label{eq:laplace_phi}
\langle \phi \rangle_t = \frac{1}{2t}\Big[\mathrm{tr}\!\big(D^2\phi(w^*)\,\Sigma\big) - D\phi(w^*)^\top\Sigma\,(T{:}\Sigma)\Big] + O(t^{-2}).
\end{equation}
\end{lemma}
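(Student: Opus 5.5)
We prove the lemma by a direct Laplace expansion after rescaling. Substituting $w = w^* + t^{-1/2}x$, the factor $e^{-tL}$ becomes
\[
\exp\!\Big(-\tfrac12 x^\top H x - \tfrac{1}{6\sqrt t}\,T_{ijk}x_ix_jx_k + O(t^{-1})\Big),
\]
and the Jacobian $t^{-d/2}$ cancels between numerator and partition function. First we localize: the contribution from outside a fixed neighbourhood of $w^*$ is exponentially small, since $w^*$ is the unique global minimum. We also assume enough growth or compactness that these tails are negligible. Inside the neighbourhood we Taylor expand both the observable and the exponent. Since $\phi(w^*)=0$,
\[
\phi(w^*+t^{-1/2}x) = t^{-1/2}\,D\phi\cdot x + \tfrac{1}{2t}\,D^2\phi[x,x] + O(t^{-3/2}|x|^3).
\]
For the exponent we write $e^{-tL} = e^{-\frac12 x^\top Hx}\big(1 - \tfrac{1}{6\sqrt t}T[x,x,x] + O(t^{-1}(|x|^4+|x|^6))\big)$.

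Next we collect terms in the ratio $\int \phi\, e^{-tL} / \int e^{-tL}$. The denominator is $Z_0(1+O(t^{-1}))$, where $Z_0$ is the Gaussian normalization with covariance $\Sigma$. Expectations are taken against the centred Gaussian $\mathcal N(0,\Sigma)$. At order $t^{-1/2}$ the numerator has $\E[D\phi\cdot x]=0$ by parity. At order $t^{-1}$ there are two surviving pieces. The first is $\tfrac12\E[D^2\phi[x,x]] = \tfrac12\mathrm{tr}(D^2\phi(w^*)\Sigma)$. The second is the cross term
\[
-\tfrac16\,\partial_i\phi\,T_{jkl}\,\E[x_ix_jx_kx_l].
\]
By Isserlis' theorem this fourth moment is $\Sigma_{ij}\Sigma_{kl}+\Sigma_{ik}\Sigma_{jl}+\Sigma_{il}\Sigma_{jk}$. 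The symmetry of $T$ makes the three pairings equal, so the cross term is $-\tfrac12\,D\phi^\top\Sigma\,(T{:}\Sigma)$. The $O(t^{-1})$ correction to the denominator multiplies a numerator that is already $O(t^{-1})$, so it affects only $O(t^{-2})$. Adding the two pieces gives \eqref{eq:laplace_phi}.

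The main obstacle is justifying that the remainder is $O(t^{-2})$ rather than merely $o(t^{-1})$. Every term at order $t^{-3/2}$ in the product expansion is odd in $x$. These include the cubic term of $\phi$, the product of the linear term of $\phi$ with the quartic part of $L$, and the product of the linear term with the square of the cubic term. Each such term therefore integrates to zero against the Gaussian, exactly as in the standard Laplace method \cite{bender_orszag,tierney_kadane}. We would control the Taylor remainders by working on a ball $|x|\le t^{\epsilon}$, where the expansion of $e^{-t(L-\frac12 x^\top Hx/t)}$ is uniformly valid, and use Gaussian tail bounds outside it. This is routine given smoothness (indeed analyticity) of $L$ and $\phi$.
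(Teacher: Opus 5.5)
Your proof is correct and follows essentially the standard Laplace-method argument behind the result the paper cites for this lemma (\cite[Theorem~4]{kass1990validity}). The paper gives no proof of its own, but its appendix uses the same ingredients one order further: rescaled Taylor expansion, Isserlis for the Gaussian moments, and parity to kill half-integer orders. Your list of $t^{-3/2}$ terms omits the product of the quadratic part of $\phi$ with the cubic part of $L$ (degree $5$), but that term is also odd, so the argument is unaffected. In general a term of order $t^{-k/2}$ has $x$-degree of the same parity as $k$.
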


We refer to \cite[Theorem 4]{kass1990validity} for a proof. The first term couples the Hessian of $\phi$ to the inverse Hessian $\Sigma$ of $L$.  The second term couples the gradient of $\phi$ to the cubic term $T$ of $L$.

\begin{example}
In one dimension, with $L(w) = \frac{\lambda}{2}w^2 + \frac{\alpha}{6}w^3 + \frac{\gamma}{24}w^4$ (assuming that $\lambda, \gamma > 0$ and $\alpha^2 < 3\lambda\gamma$ so that $L$ has a unique global minimum at 0), for the observable $\phi = w$, this reads
\begin{equation}\label{eq:mean_anharmonic_1d}
\langle w \rangle_t = -\frac{\alpha}{2\lambda^2\, t} + O(t^{-2}).%
\end{equation}
\end{example}

When $D\phi(w^*) = 0$ as well (such as for $K(w)$, which has a minimum at $w^*$), the cubic term in \eqref{eq:laplace_phi} vanishes and
\[
\langle \phi \rangle_t = \frac{1}{2t}\,\mathrm{tr}\!\big(\Sigma\,D^2\phi(w^*)\big) + O(t^{-2}).
\]
For $\phi(w) = L(w) - L(w^*)$ we have $D^2\phi(w^*) = H$, so $\frac{1}{2}\mathrm{tr}(\Sigma H) = d/2$ and $t\langle K \rangle_t \to d/2$ as $t \to \infty$.  The constant $d/2$ is the same one that appears in the Bayesian information criterion as the effective dimension of a regular model: the leading-order asymptotic of the marginal log-likelihood is $\log p(D_n) = \log p(D_n \mid \hat w) - \tfrac{d}{2}\log n + O(1)$, and our identity is the differential form of that statement, expressing the average excess loss under the Gibbs posterior in terms of the same effective dimension.

One should take away from this calculation the following idea: \emph{to probe the local geometry of $L$ by computing expectation values, one needs to impose vanishing conditions on the leading Taylor coefficients of the observable $\phi$}.

\subsubsection{Susceptibilities: the leading order term}
\label{subsec:susceptibilities_add}
Susceptibilities involve covariances rather than single expectation values.  Applying the Laplace approximation to the covariance $\Cov_t[\phi, \psi] = \langle\phi\psi\rangle_t - \langle\phi\rangle_t\langle\psi\rangle_t$, and again assuming that the observables $\phi$ and $\psi$ vanish at $w^*$, gives the following expression for the leading order term.

\begin{lemma}\label{lem:laplace_cov}
Let $\phi, \psi$ be smooth observables both vanishing at $w^*$. Then
\begin{equation}\label{eq:laplace_cov}
\Cov_t[\phi, \psi] = \frac{1}{t}\,D\phi(w^*)^\top \Sigma\,D\psi(w^*) + O(t^{-2}).
\end{equation}
\end{lemma}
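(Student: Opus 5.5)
The plan is to deduce \Cref{lem:laplace_cov} from \Cref{lem:laplace_exp}, applied twice. The covariance decomposes as $\Cov_t[\phi,\psi] = \langle \phi\psi\rangle_t - \langle\phi\rangle_t\langle\psi\rangle_t$. Both $\phi$ and $\psi$ vanish at $w^*$, so \Cref{lem:laplace_exp} applies to each separately and gives $\langle\phi\rangle_t = O(t^{-1})$ and $\langle\psi\rangle_t = O(t^{-1})$. Their product is therefore $O(t^{-2})$ and can be absorbed into the error term. This reduces the claim to computing $\langle \phi\psi\rangle_t$ to order $t^{-1}$.

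The product $\phi\psi$ is smooth, vanishes at $w^*$, and in fact vanishes to second order there: its gradient is $D(\phi\psi)(w^*) = \phi(w^*)D\psi(w^*) + \psi(w^*)D\phi(w^*) = 0$. So \Cref{lem:laplace_exp} applies to $\phi\psi$, and the cubic-anharmonicity term $D(\phi\psi)(w^*)^\top\Sigma(T{:}\Sigma)$ drops out. Only the Hessian term remains:
\[
\langle\phi\psi\rangle_t = \frac{1}{2t}\,\mathrm{tr}\big(D^2(\phi\psi)(w^*)\,\Sigma\big) + O(t^{-2}).
\]
By the product rule, and again using $\phi(w^*) = \psi(w^*) = 0$, the second derivative at $w^*$ is
\[
D^2(\phi\psi)(w^*) = D\phi\, D\psi^\top + D\psi\, D\phi^\top.
\]
The terms involving $D^2\phi$ and $D^2\psi$ are killed by the vanishing values. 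Since $\Sigma$ is symmetric, the trace against $\Sigma$ evaluates to $2\,D\phi(w^*)^\top\Sigma\,D\psi(w^*)$. Substituting gives exactly \eqref{eq:laplace_cov}.

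The only point needing care is the error term in \Cref{lem:laplace_exp}. That lemma is quoted from \cite{kass1990validity} under smoothness and regularity hypotheses, so I would check that $\phi\psi$ satisfies them whenever $\phi$ and $\psi$ do. Mild growth or integrability conditions away from $w^*$ are preserved under products in the intended setting, and the $O(t^{-2})$ remainders then combine as claimed. As an independent sanity check, one can rescale $w = w^* + t^{-1/2}x$. The rescaled measure is Gaussian $N(0,\Sigma)$ to leading order, with $\phi \approx t^{-1/2}D\phi^\top x$ and $\psi \approx t^{-1/2}D\psi^\top x$. Their covariance is then $t^{-1}D\phi^\top\Sigma D\psi$, with corrections of relative order $t^{-1/2}$. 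Parity (odd Gaussian moments vanish) upgrades these corrections to $O(t^{-2})$, confirming the stated error order.
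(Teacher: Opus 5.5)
Your proposal is correct and follows the paper's proof essentially step for step. Both arguments apply \Cref{lem:laplace_exp} to $\phi\psi$, whose value and gradient vanish at $w^*$, and evaluate the trace of $D^2(\phi\psi)(w^*) = D\phi\,D\psi^\top + D\psi\,D\phi^\top$ against $\Sigma$ as $2\,D\phi(w^*)^\top\Sigma\,D\psi(w^*)$; both then discard $\langle\phi\rangle_t\langle\psi\rangle_t = O(t^{-2})$. Your rescaling and parity check is a harmless extra that the paper does not include.
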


\begin{proof}
Apply \Cref{lem:laplace_exp} to the observable $\phi\psi$.  We have $\phi\psi(w^*) = 0$ and $D(\phi\psi)(w^*) = 0$, so
\begin{align*}
\langle \phi\psi \rangle_t &= \frac 1{2t} \left[\mathrm{tr}\!\big(D^2(\phi\psi)(w^*)\,\Sigma\big) - D(\phi\psi)(w^*)^\top\Sigma\,(T{:}\Sigma)\right] + O(t^{-2})  \\
&= \frac 1{2t} \mathrm{tr}\!\big(D^2(\phi\psi)(w^*)\,\Sigma\big) + O(t^{-2})  \\
&= \frac 1{t} D \phi(w^*)^\top \Sigma\, D\psi(w^*) + O(t^{-2}).
\end{align*}
Applying \Cref{lem:laplace_exp} separately to $\phi$ and to $\psi$ gives $\langle \phi \rangle_t = O(t^{-1})$ and $\langle \psi \rangle_t = O(t^{-1})$, hence $\langle \phi \rangle_t \langle \psi \rangle_t = O(t^{-2})$ and this product does not contribute to the leading order term.
\end{proof}

\begin{example}
In one dimension, with $L$ as in \eqref{eq:mean_anharmonic_1d} and $\phi = \psi = w$,
\begin{equation}\label{eq:cov_self_anharmonic_1d}
\Cov_t[w, w] = \frac{1}{\lambda\, t} + O(t^{-2}).%
\end{equation}
\end{example}

In order to gain some intuition about this leading term we can use the Morse lemma to choose local coordinates $u$ near $w^*$ in which $L$ is diagonal: $L(u) = \frac{1}{2}\sum_i \lambda_i u_i^2$, with eigenvalues $\lambda_1 \ge \lambda_2 \ge \cdots \ge \lambda_d > 0$. The inverse Hessian in this basis is $\Sigma = \mathrm{diag}(\lambda_1^{-1}, \ldots, \lambda_d^{-1})$, so
\begin{equation}\label{eq:cov_eigenvalues}
\Cov_t[\phi, \psi] = \frac{1}{t}\sum_{i=1}^d \frac{1}{\lambda_i}\,\frac{\partial\phi}{\partial u_i}(0)\,\frac{\partial\psi}{\partial u_i}(0) + O(t^{-2}).
\end{equation}
The contribution of each direction $u_i$ to the covariance is weighted by $\lambda_i^{-1}$.  Therefore the \emph{flattest} directions of the loss landscape (those with the smallest eigenvalues) dominate the covariance. This makes sense: at inverse temperature $t$, the Gibbs measure proportional to $e^{-tL}$ spreads furthest along the directions where $L$ rises most slowly, so posterior fluctuations are largest in flatter directions. Observables that vary along these flat directions will have the largest covariances.

Recall from \Cref{subsec:fdt_theorem} that the susceptibility $\chi(\phi; q') = -\Cov_t[\phi, \Delta L]$ measures the first-order response of $\langle\phi\rangle$ to a data perturbation $q \to q'$.  Specializing \Cref{lem:laplace_cov} to $\psi = \Delta L$ identifies its leading $1/t$ asymptotics with a classical statistical object.

\begin{definition}[Influence function]\label{def:influence_function}
For a smooth observable $\phi$ and a smooth perturbation $\Delta L$ of the loss, the \emph{influence function} \cite{hampel1974} of $\phi$ along $\Delta L$ is
\[
\mathrm{IF}(\phi; \Delta L) \;:=\; -D\phi(w^*)^\top\,\Sigma\,D\Delta L(w^*).
\]
\end{definition}

\begin{corollary}\label{cor:susceptibility_influence}
In the regular case, the susceptibility agrees with the influence function to leading order:
\begin{equation}\label{eq:influence_function}
\chi(\phi; q') \;=\; \frac{1}{t}\,\mathrm{IF}(\phi; \Delta L) + O(t^{-2}).%
\leanref{Laplace/Multi/CovarianceSharp.lean\#L4074}{gibbsCov\_first\_order\_rate\_sharp}
\end{equation}
\end{corollary}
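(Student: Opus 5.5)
The plan is to combine the fluctuation--dissipation theorem (\Cref{thm:fdt}) with \Cref{lem:laplace_cov}. \Cref{thm:fdt} gives $\chi(\phi; q') = -\Cov_t[\phi, \Delta L]$ with $t = n\beta$, so the claim reduces to an asymptotic statement about a single Gibbs covariance. The only work is that \Cref{lem:laplace_cov} is stated for observables vanishing at $w^*$, while neither $\phi$ nor $\Delta L$ need vanish there.

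\textbf{Step 1: centering.} Covariance is invariant under adding constants to either argument. I will therefore replace $\phi$ by $\tilde\phi = \phi - \phi(w^*)$ and $\Delta L$ by $\widetilde{\Delta L} = \Delta L - \Delta L(w^*)$. Both are smooth and vanish at $w^*$, and their first derivatives at $w^*$ are unchanged: $D\tilde\phi(w^*) = D\phi(w^*)$ and $D\widetilde{\Delta L}(w^*) = D\Delta L(w^*)$.

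\textbf{Step 2: apply the lemma.} \Cref{lem:laplace_cov} then gives
\[
\Cov_t[\phi,\Delta L] = \Cov_t[\tilde\phi,\widetilde{\Delta L}] = \frac{1}{t}\,D\phi(w^*)^\top\Sigma\,D\Delta L(w^*) + O(t^{-2}).
\]
Negating and comparing with \Cref{def:influence_function} yields $\chi(\phi;q') = \frac{1}{t}\,\mathrm{IF}(\phi;\Delta L) + O(t^{-2})$, which is the claim.

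\textbf{Main obstacle: hypotheses.} The algebra is routine; the care lies in checking that the assumptions of \Cref{lem:laplace_exp} (hence of \Cref{lem:laplace_cov}) hold. These are the standard conditions for the Laplace expansion:
\begin{enumerate}
\item a unique nondegenerate global minimum at $w^*$, which is the regular-case assumption;
\item smoothness of $\phi$ and of $\Delta L = L^{q'} - L$ near $w^*$, which follows if the per-sample losses are smooth and $q'$ is nice enough to differentiate under the integral;
\item enough growth or integrability away from $w^*$ that contributions outside a neighbourhood are exponentially small. Here one needs $\phi\,\Delta L$, $\phi$ and $\Delta L$ to be integrable against $e^{-tL}$ for large $t$.
\end{enumerate}
In the proof I would state these as standing assumptions, inherited from the reference for \Cref{lem:laplace_exp}.

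One further subtlety is that the definition of $\chi$ fixes $t = n\beta$. The asymptotic statement should therefore be read as $t = n\beta \to \infty$ with $q, q'$ fixed, so that $\Sigma$ and $T$ do not depend on $t$.
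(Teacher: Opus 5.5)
Your proof is correct and matches the paper's argument: the paper also obtains the corollary by combining the fluctuation--dissipation theorem $\chi(\phi;q') = -\Cov_t[\phi,\Delta L]$ with Lemma~\ref{lem:laplace_cov} specialized to $\psi = \Delta L$. Your explicit centering step is the justification the paper leaves implicit here and only spells out later, in the per-sample example: subtracting $\phi(w^*)$ and $\Delta L(w^*)$ changes neither the covariance nor the gradients.
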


Computing the influence function directly requires inverting $H$, but expressing the susceptibility as a covariance does not -- a point we will return to in \Cref{subsec:singular}, when $H$ is degenerate.

One should take away from this calculation the same idea as before: \emph{to probe the local geometry of $L$ via susceptibilities, one needs to impose vanishing conditions on the leading Taylor coefficients of the observable $\phi$}. 

\begin{example}[Per-sample susceptibilities]
For the per-sample susceptibilities of \Cref{def:per_sample}, the perturbation side is $f_z = \ell_z - L$.  Although $f_z(w^*)$ is generally nonzero, the covariance $\Cov_t[\phi, f_z]$ depends on $f_z$ only up to an additive constant, so we may apply \Cref{lem:laplace_cov} to $f_z - f_z(w^*)$ (which does vanish at $w^*$) without changing the value.  Since $w^*$ is a local minimum of $L$, we have $D L(w^*) = 0$, so $D f_z(w^*) = D \ell_z(w^*)$.  Substituting into \eqref{eq:cov_eigenvalues}:
\begin{equation}\label{eq:per_sample_eigenmodes}
\Cov_t[\phi, f_z] = \frac{1}{t}\sum_{i=1}^d \frac{1}{\lambda_i}\,\frac{\partial\phi}{\partial u_i}(w^*)\,\frac{\partial\ell_z}{\partial u_i}(w^*) + O(t^{-2}).%
\end{equation}
The data point $z$ couples strongly to eigenmode $i$ when $\partial_i \ell_z(w^*)$ is large: perturbing $w$ in direction $u_i$ changes the loss on $z$ significantly. But this contribution is weighted by $\lambda_i^{-1}$, so it matters most when $\lambda_i$ is small. The interpretation is that direction $u_i$ is one where the model can change its prediction on $z$ without much cost to overall performance (since $L$ is flat in that direction). The posterior fluctuates freely along flat directions, and if $z$'s loss varies there, the resulting covariance is large.

Conversely, a data point that couples primarily to directions where $\lambda_i$ is large has its susceptibility suppressed: even if $\partial_i\ell_z$ is large, the posterior barely fluctuates in that direction because the loss rises steeply, so there is little covariance to measure.
\end{example}

\subsubsection{Next-to-leading term}

The next term in the expansion brings in the higher order Taylor coefficients of $L$, in particular the cubic term $T = D^3 L(w^*)$.  This term will be the leading term for observables whose first derivative vanishes at $w^*$; this includes the population loss observable $K(w) = L(w) - L(w^*)$, and more generally the component loss observables $\phi_C$.  Still setting $\pi=1$ for simplicity, the full formula at order $t^{-2}$ is the following.

\begin{lemma}[Covariance at order $t^{-2}$]\label{lem:laplace_cov2}
Let $\phi$ and $\psi$ be smooth observables such that $\phi(w^*) = \psi(w^*) = 0$ and $D\phi(w^*) = 0$. Write
\[
A = D^2\phi(w^*),\quad \Phi = D^3\phi(w^*),\quad b = D\psi(w^*),\quad B = D^2\psi(w^*).
\]
Then
\begin{align}\label{eq:cov_full_t2}
\Cov_t[\phi, \psi] = \frac{1}{t^2}\bigg[&\tfrac{1}{2}\,\mathrm{tr}(A\Sigma B\Sigma) + \tfrac{1}{2}\,(\Sigma b)^\top(\Phi{:}\Sigma) \notag \\
&- \tfrac{1}{2} b^\top\Sigma A\Sigma\,(T{:}\Sigma) - \tfrac{1}{2}\,(\Sigma b)^\top\bigl(T{:}(\Sigma A\Sigma)\bigr)\bigg] + O(t^{-3}),
\end{align}
where each term is read as follows: $A\Sigma B\Sigma$ is the matrix product of two Hessians and two copies of $\Sigma$, and $\mathrm{tr}$ is the matrix trace; the contraction of a symmetric 3-tensor $K$ against a matrix $M$ is the vector
\[
(K{:}M)_i := \sum_{j,k} K_{ijk}\,M_{jk},
\]
so that $\Phi{:}\Sigma$, $T{:}\Sigma$, and $T{:}(\Sigma A \Sigma)$ are vectors; and $u^\top v = \sum_i u_iv_i$, $u^\top M w = \sum_{i,j} u_i M_{ij} w_j$ are the standard Euclidean dot product and bilinear pairing.
\end{lemma}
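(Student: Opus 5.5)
I would prove this by a direct Laplace expansion in rescaled coordinates, carried to one order beyond \Cref{lem:laplace_exp}. Translate so that $w^* = 0$ and substitute $w = x/\sqrt{t}$. Then
\[
tL(x/\sqrt t) = \tfrac12 x^\top H x + \tfrac{1}{6\sqrt t}\,T[x,x,x] + \tfrac{1}{24 t}\,Q[x,x,x,x] + O(t^{-3/2}),
\]
with $Q = D^4L(0)$. Every Gibbs expectation then becomes a ratio of Gaussian expectations under $\mathcal N(0,\Sigma)$, weighted by
\[
1 - \tfrac{1}{6\sqrt t}T[x^3] + \tfrac1t\Big(\tfrac{1}{72}T[x^3]^2 - \tfrac1{24}Q[x^4]\Big) + \cdots.
\]
The observables are expanded in the same way:
\[
\phi = \tfrac{1}{2t}A[x^2] + \tfrac{1}{6t^{3/2}}\Phi[x^3] + \tfrac{1}{24t^2}D^4\phi[x^4] + \cdots,
\]
\[
\psi = \tfrac{1}{\sqrt t}b^\top x + \tfrac{1}{2t}B[x^2] + \tfrac{1}{6t^{3/2}}D^3\psi[x^3] + \cdots.
\]
Odd-degree Gaussian moments vanish, so only integer powers of $1/t$ survive. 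The remainders are controlled exactly as in the proof of \Cref{lem:laplace_exp} cited from \cite{kass1990validity}, which I would take as given: a tail estimate away from $w^*$, plus Taylor remainders inside a shrinking ball.

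Next I would compute $\Cov_t[\phi,\psi]$. Since the covariance is invariant under adding constants and shifts with the ordinary centred moments, I would write it as the cumulant $\kappa(\phi,\psi)$ of the perturbed Gaussian measure. The prefactor of $\phi$ is already $t^{-1}$, so I only need the cumulant to relative order $t^{-1}$. That means collecting all pairings whose total $t$-power is $t^{-2}$:
\begin{enumerate}
\item The term $\tfrac{1}{2t}A[x^2]\cdot\tfrac{1}{2t}B[x^2]$ with Gaussian weight, giving the connected part $\tfrac{1}{4t^2}\cdot 2\,\mathrm{tr}(A\Sigma B\Sigma)$.
\item The term $\tfrac{1}{6t^{3/2}}\Phi[x^3]\cdot\tfrac{1}{\sqrt t}b^\top x$ with Gaussian weight, giving $\tfrac{1}{6t^2}\cdot 3\,(\Sigma b)^\top(\Phi{:}\Sigma)$.
\item The term $\tfrac{1}{2t}A[x^2]\cdot\tfrac1{\sqrt t}b^\top x$ against the cubic weight $-\tfrac{1}{6\sqrt t}T[x^3]$. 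This is a connected sixth-moment computation.
\end{enumerate}
Terms involving $Q$, $T^2$, $D^3\psi$ or $D^4\phi$ either enter at order $t^{-3}$ or are cancelled by the mean-subtraction $\langle\phi\rangle\langle\psi\rangle$. Checking this cancellation carefully is part of the work. For example, $\langle \phi\rangle_t\langle\psi\rangle_t$ contains $\tfrac{1}{2t}\mathrm{tr}(A\Sigma)\cdot\tfrac1{2t}\mathrm{tr}(B\Sigma)$, which must cancel against the disconnected part of $\langle\phi\psi\rangle$. The cleanest way to organise this is to keep only \emph{connected} Wick diagrams linking the $\phi$-vertex and the $\psi$-vertex.

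The main obstacle is item (iii), the Wick contraction of $A[x^2]\,(b^\top x)\,T[x^3]$, since both the disconnected pairings and the normalisation by $Z(t)$ must be removed. The connected pairings are of two kinds.
\begin{itemize}
\item In the first kind, the two legs of $A$ go to $T$ and $b$ goes to $T$, with the remaining $T$-legs self-contracted. Since $b$ and one $T$-leg pair, this contributes a multiple of $(\Sigma b)^\top\, T{:}(\Sigma A\Sigma)$, using both $A$-legs to $T$, counted with multiplicity $3\cdot 2$.
\item In the second kind, one leg of $A$ goes to $b$ and the other goes to a $T$-leg, with the two remaining $T$-legs self-paired. This gives $b^\top\Sigma A\Sigma(T{:}\Sigma)$ with multiplicity $2\cdot 3$.
\end{itemize}
With the prefactors $\tfrac12\cdot\tfrac16$ and the minus sign, both terms come out as $-\tfrac12$, matching \eqref{eq:cov_full_t2}. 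I would double-check these combinatorial factors against the one-dimensional quartic example, $\phi = w^2$ and $\psi = w$. There a direct computation with the moments of $e^{-t(\lambda w^2/2+\alpha w^3/6)}$ should give $\Cov_t = \tfrac{1}{t^2}\bigl(-\tfrac{\alpha}{\lambda^3}\bigr) + O(t^{-3})$. This agrees with the formula, since the two $T$-terms each give $-\tfrac12\cdot\tfrac{2\alpha}{\lambda^3}$ and the $A\Sigma B\Sigma$ and $\Phi$ terms vanish.
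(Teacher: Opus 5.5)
Your argument is correct and is essentially the paper's proof. Both use a Laplace expansion to order $t^{-2}$ that produces the same three contributions $\phi_2\psi_2$, $\phi_3\psi_1$ and $-\phi_2\psi_1V_3$, evaluated by Wick/Isserlis counting. The only difference is bookkeeping. The paper computes $\langle\phi\psi\rangle_t$ in full, using all $15$ pairings of $A_{ij}b_kT_{lmn}$, including the $3$ in which the $A$-legs self-contract. It then cancels the $\tfrac14\mathrm{tr}(A\Sigma)\mathrm{tr}(B\Sigma)$ and $\tfrac14\mathrm{tr}(A\Sigma)(\Sigma b)^\top(T{:}\Sigma)$ terms explicitly against $\langle\phi\rangle_t\langle\psi\rangle_t$. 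You instead invoke the linked-cluster principle to keep only the $12$ connected pairings from the start. Both give the four stated terms.

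There are two slips to fix.

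\begin{itemize}
\item In your ``first kind'' of pairing there are no remaining $T$-legs to self-contract: the two $A$-legs and $b$ use all three. Your multiplicity $3\cdot 2$ and the value $(\Sigma b)^\top(T{:}(\Sigma A\Sigma))$ are nevertheless right.
\item More importantly, your one-dimensional check is wrong as stated. With $A=2$, $b=1$, $\Sigma=\lambda^{-1}$, $T=\alpha$, each of the two $T$-terms equals $-\alpha/\lambda^3$. The formula therefore predicts $-2\alpha/(\lambda^3t^2)$, not $-\alpha/(\lambda^3t^2)$.
\end{itemize}

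The direct computation confirms the correct value. In rescaled variables, $\langle x^3\rangle-\langle x^2\rangle\langle x\rangle = -\tfrac{\alpha}{6\sqrt t}\,(15-3)\,\lambda^{-3}+O(t^{-3/2})$. Multiplying by $t^{-3/2}$ gives $-2\alpha/(\lambda^3t^2)$, which matches the paper's example \eqref{eq:cov_anharmonic_1d}.

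As written, the direct value you claim would contradict the formula by a factor of two. Your stated ``agreement'' rests on an arithmetic error rather than confirming your combinatorics. The combinatorics are in fact correct, since $12$ connected pairings times $-\alpha/6$ times $\lambda^{-3}$ gives exactly $-2\alpha/\lambda^3$.
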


A proof is given in Appendix~\ref{app:laplace_cubic}.

\begin{example}
In one dimension, with $L = \frac{\lambda}{2}x^2 + \frac{\alpha}{6}x^3 + \frac{\gamma}{24}x^4$ (where $\lambda, \gamma > 0$ and $\alpha^2 < 3\lambda\gamma$ as in \eqref{eq:mean_anharmonic_1d}), $\phi = x^2$ and $\psi = x$, the lemma reduces to
\begin{equation}\label{eq:cov_anharmonic_1d}
\Cov_t[x^2, x] = -\frac{2\alpha}{\lambda^3 t^2} + O(t^{-3}).
\end{equation}
\end{example}

\begin{example}[Translated losses]\label{ex:translated_losses}
Take $\phi = K = L - L(w^*)$ (vanishing to second order at $w^*$) and $\psi = f_z = \ell_z - L$.  The latter has $\psi(w^*) = \ell_z(w^*) - L(w^*) \neq 0$ in general, but additive constants of $\psi$ drop out of the covariance, so we may apply \Cref{lem:laplace_cov2} to $\psi - \psi(w^*)$ in place of $\psi$.  The formula simplifies because $A = D^2\phi(w^*) = H$ (so $\Sigma A \Sigma = \Sigma$) and $\Phi = D^3\phi(w^*) = T$: two of the four terms of \eqref{eq:cov_full_t2} cancel and the remaining two give
\begin{equation}\label{eq:llc_suscept}
\Cov_t[K, f_z] = \frac{1}{2t^2}\,\mathrm{tr}(D^2\ell_z(w^*)\Sigma - I) - \frac{1}{2t^2}\,(\Sigma D\ell_z(w^*))^\top(T{:}\Sigma) + O(t^{-3}).
\end{equation}
In the diagonal basis this reads
\[
\Cov_t[K, f_z] = \frac{1}{2t^2}\sum_i \frac{\partial_i^2\ell_z(w^*) - \lambda_i}{\lambda_i}
  - \frac{1}{2t^2}\sum_i \frac{\partial_i\ell_z(w^*)}{\lambda_i}\sum_j \frac{T_{ijj}}{\lambda_j} + O(t^{-3}).
\]
\end{example}

The first term averages to zero against the distribution $q$ (since $\E_q[D^2\ell_z] = H$) and carries no structural information about which data points couple to which directions of the loss landscape. The summands in the second term are proportional to the product of components of the cubic tensor $T_{ijk}$ and the per-sample loss gradients $\partial_i\ell_z(w^*)$ weighted by $\lambda_i^{-1}$, so the flattest directions dominate.  If $L$ is exactly quadratic ($T = 0$), the latter contribution vanishes: at the population level, the LLC of a nondegenerate model does not respond to data perturbations at leading order.  Susceptibilities of the LLC associated to the loss observable therefore probe the \emph{higher order} structure of the loss landscape, which is exactly what makes them informative for singular models.

This suggests a canonical experiment: compute the LLC susceptibility $\Cov_t[K, f_z]$ across a collection of data points $z$ and look for nontrivial structure in the resulting vector.  When $T \neq 0$ the cubic-tensor contribution encodes which data points couple to which nonlinear directions of $L$, manifesting as clustering or low-rank patterns; when $T = 0$ this contribution vanishes.  The structured LLC susceptibilities observed empirically in \cite{lang2} are direct evidence that the cubic term $T$ is non-vanishing.

We conclude the following: \emph{to probe the cubic term $T$ (or more generally, any higher order Taylor coefficients) using susceptibilities, one must impose vanishing conditions on the gradient of the observable $\phi$ at $w^*$.}  An observable with $D\phi(w^*) \neq 0$ has its leading-order susceptibility fixed by the influence function according to \Cref{cor:susceptibility_influence}.

\subsection{The singular case}\label{subsec:singular}

In the \emph{singular} case (the generic situation for neural networks \cite{watanabe2007almost,watanabe}), the Hessian has zero eigenvalues, and the local geometry of $L$ near $w^*$ is controlled by higher-order terms in the Taylor expansion. This is where singular learning theory \cite{watanabe} enters: instead of using the Morse lemma to choose natural local coordinates we use a resolution of singularities, which (essentially) provides a set of local coordinates in which the higher order Taylor coefficients take a canonical form.  The term $d/2$ that appears in the BIC as the effective dimension of a regular model is replaced by a more refined rational invariant called the \emph{real log canonical threshold} (RLCT) of $L$ near $w^*$.

More concretely, we may standardize the singular case in two steps.  
\begin{enumerate}
\item The \emph{splitting lemma} \cite[Lemma~1]{gromoll1969} tells us that near $w^*$ there exist coordinates $(u, v) \in \R^r \times \R^{d-r}$, with $r = \mathrm{rank}\,H$, in which
\begin{equation}\label{eq:splitting}
L(u, v) = \tfrac{1}{2}\textstyle\sum_{i=1}^r \lambda_i u_i^2 + g(v), \qquad g(v) = O(|v|^3).
\end{equation}
The first term contributes a Gaussian to the posterior in the directions where $H$ is non-degenerate, and the second term contains all the singular geometry.  Unlike in the regular case the loss now admits a non-trivial vanishing locus $W_0 = \{w : L(w) = L(w^*)\}$, which is now a positive-dimensional subvariety rather than just an isolated point $\{w^*\}$.  The integral in the $u$ variables can be evaluated by the Laplace expansions of \Cref{subsec:expectation_taylor}.

\item We are left with the integral against $e^{-tg(v)}\, dv$.  This is the hard part.  We may standardize it by choosing a resolution of singularities $\pi : U \to \R^{d-r}$ on which the pullback $g \circ \pi$ is locally a monomial in the resolution coordinates.  The techniques for studying these integrals are developed in \cite{watanabe}.
\end{enumerate}   

The lessons we learned about choosing ``good'' observables in \Cref{sec:geometry} generalize accordingly: where in the regular case we asked observables to vanish to a given order \emph{at $w^*$}, in the singular case we ask them to vanish to a given order \emph{on $W_0$}.  The excess loss $K(w) = L(w) - L(w^*)$ is canonical in this sense: it vanishes on all of $W_0$ (since $L$ is constant there).  

\subsubsection{Per-sample losses}
If we make an additional assumption -- \emph{essential uniqueness}, that all parameters in $W_0$ correspond to the same distribution, i.e. $p(\cdot | w_0) = p(\cdot | w_0')$ for every $w_0, w_0' \in W_0$ -- the per-sample losses $\ell_z$ (\Cref{def:losses}) inherit the same property: $\ell_z(w) - \ell_z(w^*)$ vanishes on $W_0$.  This holds automatically when the true distribution $q$ is realizable, that is, when $q$ lies in the model class $\{p(\cdot \mid w) \colon w \in W\}$.  Per-sample losses can therefore appear naturally either as perturbation directions on the data side of the covariance, or on the observable side with $\ell_z - \ell_z(w^*)$ vanishing on $W_0$.  The resulting susceptibility, with the per-sample loss on both sides of the covariance, is the population-level loss kernel of \cite{singfluence2}.

\subsubsection{Component-restricted losses}\label{subsec:component_losses}
More generally we may consider observables that vanish only in \emph{some} directions within the vanishing locus $W_0$.  The component observable $\phi_C$ of \Cref{def:phi_C} has this property.  To isolate the geometry of $L$ within a specific component $C$, we need an observable $\phi$ that depends only on the restriction to the slice $\{u^*\} \times C$.  The natural choice is to consider the restriction of the canonical observable $K$ to $C$.  In this case $t\,\langle \phi_C \rangle_t$ is the \emph{component-refined learning coefficient} of \cite{lang1} -- a measure of the complexity attributable to component $C$.  See \cite{grammar} for a geometric account via the resolution of singularities of $L$.

\subsubsection{The RLCT}\label{subsec:rlct}

The singular analog of the regular-case identity $t\langle K \rangle_t = d/2$ is Watanabe's theorem \cite[Main Theorem 6.2]{watanabe}:
\begin{equation}\label{eq:rlct_K}
t\langle K \rangle_t = \lambda + O\!\big((\log t)^{-1}\big),
\end{equation}
where $\lambda$ is the RLCT of the loss function $L$ near $w^*$.  One can determine $\lambda$ analytically: it is the location of the leading pole, in the variable $z$, of the analytically continued zeta function
\begin{equation}\label{eq:rlct_zeta}
\zeta(z) = \int K(w)^z\,\pi(w)\,dw;
\end{equation}
the order of this pole is called the \emph{multiplicity} $m$ of the RLCT.  Both $\lambda$ and $m$ are population-level invariants of $L$.

The RLCT replaces the regular-case effective dimension $d/2$ and, like it, controls the leading-order Bayesian asymptotics.  For example, Watanabe proves singular analogues (at $\beta = 1$) of the BIC \cite[Main Theorem 6.2]{watanabe} -- this is the basis for his \emph{widely applicable Bayesian information criterion} \cite{wbic} -- and the Bayes-error formula \cite[Corollary 6.3]{watanabe},
\[
F_n = \lambda \log n - (m-1)\log\log n + O(1), \qquad \E[B_g] = \lambda/n + o(1/n),
\]
where $F_n = -\log Z^{\mathrm{emp}}_\beta$ is the free energy of the empirical posterior (\Cref{def:posteriors}), and $B_g = \mathrm{KL}\big(q \,\|\, p(\cdot \mid D_n)\big)$ is the Bayes generalization error -- the KL divergence from the truth $q$ to the Bayesian predictive distribution $p(\cdot \mid D_n) = \E_{w \sim \Pi^{\mathrm{emp}}_\beta}[p(\cdot \mid w)]$.  We access the RLCT in practice only through theorems of this kind, which connect the geometric invariant to empirical observables.

\begin{remark}
The susceptibility framework has been applied to noisy Turing machines in \cite{aixi2}, where it is shown that discrete program structure manifests as low-rank structure in the structural susceptibility matrix.  When the Turing machine decomposes into a \emph{controller} component along with two \emph{subroutine} components responsible for different classes of symbol, the off-diagonal blocks of the centered susceptibility matrix have rank at most two.  When the two subroutines are additionally equivalent up to a relabelling of symbols and states, the susceptibility matrix acquires a $\mathbb{Z}/2$-equivariance relating the two diagonal blocks.  These results provide theoretical evidence, in a well-controlled model setting, that component-restricted losses capture genuine computational structure.
\end{remark}

\begin{remark}
Much of the analysis applied to neural networks in the literature is predicated on the assumption that $L$ is nondegenerate, and thus depends on intuitions about ``curvature'' that are properties of $H$ and $\Sigma = H^{-1}$ (for instance, the classical influence function $-D\phi^\top H^{-1}D\Delta L$). As is often the case in mathematics, these intuitions do not \emph{directly} generalize to the degenerate case: when $H$ is singular, $H^{-1}$ does not exist. But the intuition can be rescued by first reformulating the same insights in terms of expectation values and covariances, which \emph{are} valid probes of the geometry of $L$ even when $H$ is degenerate. Once this transition is made (from influence functions formulated in terms of Hessians to susceptibilities formulated as covariances), the singular case becomes accessible: the covariance $\Cov[\phi, \Delta L]$ is well-defined, may be estimated in practice, and its asymptotics probe the singular geometry of the loss $L$.
\end{remark} 

The slogan to take away from this discussion extends the slogan we presented in the regular case: \emph{to probe the singular geometry of $L$ by computing expectation values, one needs observables vanishing on $W_0$ (or on a subspace thereof).}  The component observable $\phi_C$ exhibits an analog of choosing $\phi$ to vanish to second order at $w^*$  as in \Cref{lem:laplace_exp}.

\section{Susceptibilities and patterning}
\label{sec:patterning}

The preceding sections developed susceptibilities as tools for reading the internal structure of a model: given a trained network and a data distribution, a susceptibility matrix encodes which observables respond to which data patterns. The \emph{patterning} program \cite{patterning} inverts this: given a desired change in structural coordinates one wishes to find a data distribution that achieves it. This section frames both the forward problem (interpretability) and the inverse problem (patterning) in terms of a single map between tangent spaces.

\subsection{The structural coordinate map}

We take for granted that internal structure of a learned model is encoded in the local geometry of the loss landscape, and -- as discussed in the previous section -- that this geometry is probed by expectation values of observables under the posterior. Given a collection of observables $\phi_1, \ldots, \phi_H$, the vector of posterior expectations defines a map from data distributions to structural coordinates (as in \Cref{def:struc_coord}):
\begin{equation}\label{eq:mu_patterning}
\mu: \mathscr{D} \to \R^H, \qquad r \mapsto \big(\langle \phi_1 \rangle_{r},\; \ldots,\; \langle \phi_H \rangle_{r}\big),
\end{equation}
where $\mathscr{D}$ is a finite-dimensional space of data distributions of dimension $N$ -- the simplex on a discrete data space, or a chosen finite-dimensional family of perturbation directions in the continuous case -- and $\langle \phi_j \rangle_{r}$ denotes the population posterior expectation (\Cref{def:posteriors}) for the population loss coming from $r$. The map $\mu$ factors as $\mu = G \circ F$, where $F \colon \mathscr{D} \to \mathrm{Prob}(W)$ sends a data distribution to the corresponding tempered posterior and $G \colon \mathrm{Prob}(W) \to \R^H$ extracts the structural coordinates.

For a mixture perturbation $q_h = (1-h)q + hq'$ with tangent vector $dq = q' - q$, the density property \eqref{eq:density_clean} gives
\[
\frac{1}{n\beta}\,D\mu_q(dq) = X dq, \qquad X_{jk} = \chi_{z_k}(\phi_j),
\]
where $X \in \R^{H \times N}$ is the susceptibility matrix.  Since $\chi(\phi_j; q') = \frac{1}{n\beta}\frac{\partial}{\partial h}\langle \phi_j \rangle_h\big|_{h=0}$, the matrix $X$ is $\frac{1}{n\beta}$ times the Jacobian of $\mu$.  So we might write the above observation more suggestively as
\begin{equation}\label{eq:forward_tangent}
d\mu = n\beta X \, dq.
\end{equation}

\subsection{The forward problem: interpretability}

The forward problem is to read the model: given a trained network, compute the structural coordinates $\mu(q)$ and interpret them. The susceptibility matrix $X$ tells us how these coordinates respond to first-order changes in the data distribution. Its singular value decomposition 
\begin{equation} \label{eq:X_SVD}
X = \sum_\alpha \sigma_\alpha u_\alpha v_\alpha^\top
\end{equation}
 decomposes this linear response into \emph{modes}, where the right singular vectors $v_\alpha$ are coherent patterns in data space (groups of data points that affect the model in the same way) and the left singular vectors $u_\alpha$ are coherent structures in observable space (groups of observables that respond together). The singular values $\sigma_\alpha$ quantify the coupling strength. When the observables are component observables $\phi_{C_j}$, this is the basis of the ``spectroscopy'' program of \cite{lang3}: the modes are the spectral lines of the model.

\subsection{The inverse problem: patterning}
\label{subsec:patterning_inverse}

The inverse problem is to write the model: given a desired change $d\mu^* \in \R^H$ in the structural coordinates (for instance, strengthening one circuit and weakening another), find a perturbation $dq \in T_q\mathscr{D}$ that achieves it:
\begin{equation}\label{eq:patterning_inverse}
n\beta\,X\,dq = d\mu^*.
\end{equation}
Since $X$ is typically neither square nor full-rank, this system is underdetermined (many data perturbations produce the same structural change) or overdetermined (some target changes are not achievable by any first-order perturbation). The minimum-norm least-squares solution is given by applying the Moore--Penrose pseudo-inverse to $d\mu^*$ \cite{penrose1956}:
\begin{equation}\label{eq:patterning_pseudoinverse}
dq_{\mathrm{opt}} = \frac{1}{n\beta}\,X^\dagger\,d\mu^*.
\end{equation}
Using the singular value decomposition for $X$ \Cref{eq:X_SVD} we can write this concretely as
\begin{equation}\label{eq:patterning_svd}
dq_{\mathrm{opt}} = \frac{1}{n\beta}\sum_{\alpha:\,\sigma_\alpha > 0} \frac{\langle u_\alpha,\, d\mu^* \rangle}{\sigma_\alpha}\,v_\alpha.
\end{equation}
That is, we project the target onto the principal structures $u_\alpha$, scale by the inverse coupling strength $\sigma_\alpha^{-1}$, and reconstruct in data space using the corresponding principal data patterns $v_\alpha$. Small singular values are amplified, so in practice one truncates or regularizes the pseudo-inverse (we return to such practical issues in \Cref{subsec:estimator_gap}).

The conceptual point to take away is that interpretability and patterning are dual aspects of the same linear map: the susceptibility matrix $X$ is (up to scaling by $n\beta$) the Jacobian of the structural coordinate map, the SVD decomposes it into modes, and the pseudo-inverse inverts through these modes. This is the same linear-response framework used to read internal structure, now inverted to write it \cite{patterning}.

\section{Susceptibilities in practice}
\label{sec:pop_emp}

This section addresses the practical aspects of computing susceptibilities for neural networks, as implemented in \cite{lang2,lang3}.  We work through the three approximations involved in going from the population-level definition to a number computed in practice: from population to empirical posterior, from full to weight-restricted posterior for component observables, and from exact posterior expectations to SGLD samples.

\subsection{Defining susceptibilities at the population level}

The definitions in \Cref{sec:definition} involve the population loss $L(w) = \E_q[\ell_{xy}(w)]$, which requires integration over the true distribution $q$.  Since $q$ is unknown, these definitions are not directly computable.  A reader encountering the theory for the first time may wonder: why define susceptibilities in terms of a quantity we cannot measure?  The answer is the same as for the RLCT (\Cref{subsec:rlct}): the geometric content lives at the population level, where $\lambda$ is intrinsically defined as a property of $L$, and theorems connect that content to the empirical observables we actually compute.

Susceptibilities follow the same logic. The population loss $L^h(w) = (1-h)L(w) + hL^{q'}(w)$ depends smoothly on $h$, so the derivative
\[
\chi = \frac{1}{n\beta}\frac{\partial}{\partial h}\langle \phi \rangle_{\beta,h}\bigg|_{h=0}
\]
is a well-defined object: it is the derivative of the structural coordinate function on the statistical manifold of data distributions as discussed in \Cref{sec:patterning}. For the empirical loss the situation is different: $L_n$ is determined by a fixed finite sample $D_n$ and admits no continuous deformation analogous to the family $L^h$.  The natural empirical counterpart of $\chi$ is therefore a different kind of derivative -- the derivative of the empirical posterior with respect to per-sample weights on $D_n$, which yields the empirical susceptibility estimator $\hat{\chi}_{z_i}(\phi) = -\Cov^{\mathrm{emp}}[\phi, \ell_{z_i} - L_n]$ (\Cref{subsec:empirical_estimator}).  This estimator has an independent operational meaning as a per-sample weight derivative \cite{gustafson1996,giordano2018,singfluence1}.  Alternatively, as a random variable depending on $D_n$, the sense in which it approximates the population $\chi$ requires some technical analysis (outlined in \Cref{subsec:empirical_estimator} and made precise in \cite{theory_notes}).  The population-level definition is where the geometric content (described by quantities such as the RLCT) lives; the empirical estimator is how we access it.

\begin{remark} \label{subsec:sgd_gap}
We mention at this point an additional practical consideration.  The theoretical development treats $w^*$ as a local minimizer of the population loss $L$.  In practice the trained parameters $w^*$ we actually study are the endpoint of a stochastic gradient descent trajectory, only approximately a local minimizer for either the population loss $L$ or the empirical loss $L_n$.  This gap between theory and practice is not specific to susceptibilities; it is common throughout the application of singular learning theory to neural networks \cite{qd,icl1,lang1}.
\end{remark}

\subsection{The empirical susceptibility estimator}
\label{subsec:empirical_estimator}

The population susceptibility $\chi_z(\phi) = -\Cov[\phi, \ell_z - L]$ is defined under the population posterior and involves the population loss $L$. The \emph{empirical susceptibility estimator} used in \cite{lang2,lang3} replaces both by their empirical counterparts:
\begin{equation}\label{eq:empirical_chi}
\hat{\chi}_{z_i}(\phi) = -\Cov^{\mathrm{emp}}\!\big[\phi,\, \ell_{z_i} - L_n\big],
\end{equation}
where the covariance is under the empirical posterior $\Pi^{\mathrm{emp}}_\beta$ and $L_n = \frac{1}{n}\sum_j \ell_{z_j}$ is the empirical loss. This is a random variable depending on the dataset $D_n$ and the posterior samples. It can be justified from two independent directions: as an estimator of the population susceptibility (top-down), and as a per-sample weight derivative of the empirical posterior (bottom-up).

\subsubsection{Bottom-up: per-sample weight derivatives.} \label{sub:reweighting}
The covariance formula \eqref{eq:empirical_chi} has an independent meaning that does not require the population-level theory. Given a dataset $D_n = \{z_1, \ldots, z_n\}$, assign each sample a weight $\rho_i > 0$ and define the reweighted posterior
\begin{equation}\label{eq:reweighted_posterior}
p_{\boldsymbol{\rho}}(w \mid D_n) \propto \exp\!\Big(-\beta\sum_{j=1}^n \rho_j\, \ell_{z_j}(w)\Big)\,\pi(w).
\end{equation}
For the vector $\boldsymbol{\rho} = \mathbf{1}$, this is the empirical posterior. Differentiating a posterior expectation with respect to a single weight $\rho_i$ and applying the quotient rule (exactly as in the proof of \Cref{thm:fdt}) gives
\begin{equation}\label{eq:weight_derivative}
\frac{\partial}{\partial \rho_i} \E_{\boldsymbol{\rho}}[\phi(w)]\bigg|_{\boldsymbol{\rho} = \mathbf{1}} = -\beta\,\Cov^{\mathrm{emp}}\!\big[\phi,\, \ell_{z_i}\big].
\end{equation}
This is the \emph{local case sensitivity} of Gustafson \cite{gustafson1996}: the response of $\langle \phi \rangle$ to upweighting sample $z_i$. The identity \eqref{eq:weight_derivative} is the empirical counterpart of the fluctuation--dissipation theorem (\Cref{thm:fdt}): the derivative of a posterior expectation with respect to a data weight equals a posterior covariance with the corresponding log-likelihood. In the neural network setting, this weight derivative is the \emph{Bayesian influence function} (BIF) of \cite{singfluence1}.

The centering that produces the empirical susceptibility arises from restricting to probability-preserving perturbations. Reweighting by $\rho_i = 1 + s_i$ with $\sum_i s_i = 0$ changes the empirical distribution but preserves its total mass. The resulting change in $\langle \phi \rangle$ is
\[
\sum_i s_i \frac{\partial}{\partial \rho_i}\E_{\boldsymbol{\rho}}[\phi]\bigg|_{\mathbf{1}} = -\beta \sum_i s_i\,\Cov^{\mathrm{emp}}[\phi, \ell_{z_i}] = -\beta \sum_i s_i\,\Cov^{\mathrm{emp}}[\phi, \ell_{z_i} - L_n],
\]
where the centering by $L_n = \frac{1}{n}\sum_j \ell_{z_j}$ is free because $\sum_i s_i = 0$. The per-sample coefficient $-\Cov^{\mathrm{emp}}[\phi, \ell_{z_i} - L_n] = \hat{\chi}_{z_i}(\phi)$ is the empirical susceptibility \eqref{eq:empirical_chi}. The formula is \emph{exact} at the empirical level: it holds for any posterior (regular or singular, well-specified or misspecified), requires no asymptotic assumptions, and is computable from a single set of posterior samples.

\subsubsection{Top-down: estimating the population susceptibility.}
The empirical estimator \eqref{eq:empirical_chi} is, indeed, an estimator for the population susceptibility $\chi_z(\phi) = -\Cov[\phi, \ell_z - L]$ defined under $\Pi^{\mathrm{pop}}$.  The relationship between $\hat\chi$ and $\chi$ is governed by an intermediate object, the \emph{loss kernel}.  Fix a reference parameter $w^*$ and write $\tilde{f}_z(w) = \ell_z(w) - \ell_z(w^*)$ for the \emph{log density ratio} (the per-sample loss centered at $w^*$ rather than against the data distribution as in $f_z = \ell_z - L$).  For \emph{loss-linear} observables -- those of the form $\phi(w) = \int \tilde{f}_z(w)\,\alpha(dz)$ for some measure $\alpha$ on the data space -- the susceptibility factors through this kernel as a bilinear pairing.

\begin{definition}[Loss kernel]\label{def:loss_kernel}
For a posterior measure $\Pi$ on $W$, the \emph{loss kernel} is
\[
\mathcal{K}_\Pi(z, z') = \Cov_{w \sim \Pi}\!\big(\tilde{f}_z(w),\, \tilde{f}_{z'}(w)\big).
\]
We write $\mathcal{K}^{\mathrm{pop}}$ for the kernel under $\Pi^{\mathrm{pop}}$ and $\hat{\mathcal{K}}$ for the kernel under $\Pi^{\mathrm{emp}}$.
\end{definition}

The companion paper \cite{theory_notes} establishes that the empirical susceptibility estimator is \emph{consistent and asymptotically unbiased} for the population susceptibility in the SLT regime ($n\beta \to \infty$, $\beta \to 0$): $\hat{\chi}_{z_i}(\phi) - \chi_{z_i}(\phi) \to 0$ in probability, and $\E[\hat{\chi}_{z_i}(\phi)] - \chi_{z_i}(\phi) \to 0$.  The proof proceeds by establishing pointwise convergence of the empirical loss kernel $\hat{\mathcal{K}}$ to its population counterpart, plus a domination condition that controls the integration over data space; the extension to observables that are not necessarily loss-linear goes through a generalization of the loss kernel that we call the \emph{coupling kernel} (see \emph{loc.\ cit.} for details).

\subsection{Component observables in practice}

\subsubsection{Weight restriction and the renormalization gap}

For the component observable $\phi_C(w) = \delta(u - u^*)[L(w) - L(w^*)]$, the delta function introduces an additional wrinkle into the estimation procedure: writing $\Pi_{\mathrm{full}}$ for the full posterior (either $\Pi^{\mathrm{pop}}$ or $\Pi^{\mathrm{emp}}$, depending on context) and $Z_{\mathrm{full}}$ for its partition function, the expectation $\langle \phi_C \rangle$ factorizes as
\[
\E_{\Pi_{\mathrm{full}}}[\phi_C] = \frac{Z_C(u^*)}{Z_{\mathrm{full}}} \cdot \E_{\Pi_{\mathrm{ref}}}[L(u^*, v) - L(w^*)],
\]
where $Z_C(u^*) = \int e^{-n\beta L(u^*, v)}\pi(u^*, v)\,dv$ is the \emph{component partition function} and $\Pi_{\mathrm{ref}}$ is the \emph{weight-refined posterior} -- the posterior restricted to the component parameters $v$ with $u$ fixed at $u^*$.

The susceptibility of this component observable under the full posterior involves expectations for both the full posterior and its weight-refined version:
\begin{equation}\label{eq:chi_hybrid}
\chi_C(g) = -\frac{Z_C}{Z_{\mathrm{full}}}\Big(\E_{\mathrm{ref}}\big[K(u^*, v)\,\Delta L_g\big] - \E_{\mathrm{ref}}\big[K(u^*, v)\big]\,\E_{\mathrm{full}}[\Delta L_g]\Big).
\end{equation}
The inclusion of both distributions is necessary here: the delta function confines the loss term to the slice, but the term $\E_{\mathrm{full}}[\Delta L_g]$ is the global baseline arising from differentiating the full partition function.

In practice, the factor $Z_C/Z_{\mathrm{full}}$ is not computed.  Instead, one works with the \emph{renormalized susceptibility} $\tilde{\chi}_C(g) = (Z_{\mathrm{full}}/Z_C)\,\chi_C(g)$, which is the quantity directly estimated by weight-restricted SGLD \cite{lang2}.  The factor $Z_C/Z_{\mathrm{full}}$ is the posterior density on the slice $\{u^*\} \times C$; it depends on $C$ but not on the perturbation direction $g$ or the data point $(x, y)$.  For per-sample susceptibilities the relation reads
\[
\tilde{\chi}^C_{xy} = \frac{Z_{\mathrm{full}}}{Z_C}\,\chi^C_{xy},
\]
so the renormalized susceptibility differs from the population susceptibility by a $C$-dependent multiplicative constant.  While this difference is genuine, it is absorbed by the standardization procedure described in \Cref{subsubsec:standardization} below.

The theorems of \cite{theory_notes} apply to general distributional observables, including those involving a delta distribution on a component. 

\subsubsection{Standardization and the susceptibility matrix}\label{subsubsec:standardization}

The structural susceptibility matrix $\tilde{X} \in \R^{H \times D}$, with $\tilde{X}_{jk} = \tilde{\chi}^{C_j}_{x_k y_k}$, has two features that complicate direct analysis:
\begin{itemize}
\item Different components have different overall scales, partly due to the unestimated renormalization factor ($Z_{\mathrm{full}}/Z_{C_j}$ varying with $j$) and partly due to genuine differences in the magnitude of each component's response.
\item The per-sample susceptibilities are centered ($\E_q[\chi^C_{xy}] = 0$ at the population level), so the column means of the population susceptibility matrix vanish. At finite $n$, the empirical column means are small but not exactly zero.
\end{itemize}

Following \cite{lang2,lang3}, the susceptibility matrix is \emph{standardized} before downstream analysis. In \cite{lang2}, the per-sample susceptibilities are decomposed as
\[
\chi^C_{xy} = -\underbrace{\Cov\!\big[\delta(u-u^*)L(w),\, \ell_{xy}(w)\big]}_{\psi^C_{xy}} + \underbrace{\Cov\!\big[\delta(u-u^*)L(w),\, L(w)\big]}_{\text{depends only on }C},
\]
and the $C$-only term is removed by subtracting the mean across data points for each component (the column mean). This leaves the \emph{standardized susceptibility} $\psi^C_{xy}$, which depends on both $C$ and $(x,y)$.

In \cite{lang3}, a more systematic procedure is used: column-wise z-scoring (subtract mean, divide by standard deviation for each component), followed by row-wise centering (subtract mean for each data point). The column-wise z-scoring has a precise effect on the renormalization gap: since $\tilde{\chi}^C_{xy} = (Z_{\mathrm{full}}/Z_C)\,\chi^C_{xy}$, the column standard deviation is
\[
\sigma_C = \frac{Z_{\mathrm{full}}}{Z_C}\,\mathrm{std}_{xy}[\chi^C_{xy}],
\]
so dividing by $\sigma_C$ removes the factor $Z_{\mathrm{full}}/Z_C$ exactly:
\[
\frac{\tilde{\chi}^C_{xy} - \overline{\tilde{\chi}^C}}{\sigma_C} = \frac{\chi^C_{xy} - \overline{\chi^C}}{\mathrm{std}_{xy}[\chi^C_{xy}]}.
\]
The standardized matrix is therefore independent of the renormalization gap. It depends only on the \emph{shape} of each component's susceptibility profile across data points, not its overall scale. This is why the renormalization gap does not affect the conclusions drawn in \cite{lang2,lang3}: the standardization procedure absorbs it.

\subsection{SGLD estimation} \label{sec:sgld}

The posterior expectations appearing in the susceptibility are estimated by \emph{Stochastic Gradient Langevin Dynamics} (SGLD) \cite{welling}. For each component $C$, a weight-restricted SGLD chain samples the component parameters $v$ while clamping $u = u^*$:
\[
v_{t+1} = v_t - \frac{\epsilon}{2}\Big[n\beta\,D_v L_m(u^*, v_t) + \gamma(v_t - v^*)\Big] + \eta_t, \qquad \eta_t \sim \mathcal{N}(0, \epsilon),
\]
where $\epsilon > 0$ is the step size, $L_m$ is the loss on a minibatch of size $m$, and $\gamma > 0$ enforces localization around $v^*$. A separate unrestricted chain provides samples from the full empirical posterior $\Pi^{\mathrm{emp}}_\beta$, used to estimate $\E_{\mathrm{full}}[B]$.

Given $r$ samples $\{v_t\}_{t=1}^r$ from the restricted chain and $\{w_s\}_{s=1}^{r'}$ from the full chain, the renormalized per-sample susceptibility is estimated as
\begin{align*}
\hat{\tilde{\chi}}^C_{xy} \;=\; & -\frac{1}{r}\sum_{t=1}^r \big[L_n(u^*, v_t) - L_n(w^*)\big]\,\big[\ell_{xy}(u^*, v_t) - L_n(u^*, v_t)\big] \\
& + \bigg(\frac{1}{r}\sum_{t=1}^r \big[L_n(u^*, v_t) - L_n(w^*)\big]\bigg) \cdot \bigg(\frac{1}{r'}\sum_{s=1}^{r'} \big[\ell_{xy}(w_s) - L_n(w_s)\big]\bigg).
\end{align*}
This is the empirical version of the hybrid covariance \eqref{eq:chi_hybrid}: the first sum runs over the restricted chain (component-restricted samples), and the global-baseline term in the second product uses the full chain.

\subsection{Patterning in practice}
\label{subsec:patterning_practice}

The technique of patterning \eqref{eq:patterning_pseudoinverse} prescribes a perturbation of the continuous data distribution $q$.  In practice we work with a finite data batch $D_n$, and this perturbation must be realized concretely through some operation on the batch.  The natural realization is \emph{reweighting}: assigning each sample $z_i$ a weight $\rho_i$ and training against the reweighted empirical loss as in \Cref{sub:reweighting}.  This subsection works through the connection between the optimal perturbation determined by patterning in the continuous setting and its parallel given by discrete reweighting, the practical considerations that arise in implementation, and a caveat about the gap between the empirical estimator and the population pseudo-inverse it estimates, which is resolvable by ridge regularization.

\subsubsection{Batch reweighting}
Given a weight vector $\rho = (\rho_1, \ldots, \rho_n)$ with $\sum_i \rho_i = n$, define the reweighted empirical loss
\[
L^\rho_n(w) = \frac{1}{n}\sum_{i=1}^n \rho_i\,\ell_{z_i}(w).
\]
This interpolates between the original empirical loss ($\rho = \mathbf{1}$) and the empirical loss of a perturbed distribution.  Setting $\rho_i = 1 + \varepsilon\,s_i$ with $\sum_i s_i = 0$ and $\varepsilon$ a perturbation strength gives
\[
L^\rho_n = L_n + \frac{\varepsilon}{n}\sum_i s_i\,\ell_{z_i}.
\]
The perturbation direction $\Delta L_n(w) = \frac{1}{n}\sum_i s_i\,\ell_{z_i}(w) = \frac{1}{n}\sum_i s_i\,f_{z_i}(w)$ (using $\sum_i s_i = 0$ and $f_{z_i} = \ell_{z_i} - L_n$) plays the role of $\Delta L$ from the population theory.  In terms of the empirical distribution $\hat{q}_n = \frac{1}{n}\sum_i \delta_{z_i}$, the reweighted distribution is $\hat{q}^\rho = \frac{1}{n}\sum_i \rho_i \delta_{z_i}$, and the tangent vector at $\hat{q}_n$ is $dq_k = \varepsilon s_k / n$.

The resulting change in structural coordinates is, by \eqref{eq:forward_tangent},
\[
d\mu = n\beta\,\hat{X}\,dq = n\beta\,\hat{X}\,\frac{\varepsilon s}{n} = \varepsilon\beta\,\hat{X}\,s,
\]
where $\hat{X}_{ji} = \hat{\chi}_{z_i}(\phi_j)$ is the empirical susceptibility matrix evaluated on the batch.  Equivalently, we can verify this directly using the per-sample weight derivative \eqref{eq:weight_derivative}: the change in $\langle \phi_j \rangle$ from the reweighting is $\sum_i \varepsilon s_i \cdot (-\beta)\Cov^{\mathrm{emp}}[\phi_j, \ell_{z_i}] = \varepsilon\beta \sum_i s_i \hat{\chi}_{z_i}(\phi_j)$, confirming $d\mu_j = \varepsilon\beta\,(\hat{X}s)_j$.

The inverse problem for the reweighting vector $s$ is then
\begin{equation}\label{eq:batch_reweight}
\varepsilon\beta\,\hat{X}\,s = d\mu^*, \qquad s_{\mathrm{opt}} = \frac{1}{\varepsilon\beta}\,\hat{X}^\dagger\,d\mu^*.
\end{equation}
The resulting weight for sample $i$ is $\rho_i = 1 + \varepsilon\,s_i$, which upweights samples whose susceptibility profile aligns with the target and downweights those that oppose it.  In practice the perturbation strength $\varepsilon$ and the magnitude of $d\mu^*$ are not independent (one can always absorb $\varepsilon$ into $d\mu^*$); the separate notation makes explicit that the linear approximation is valid only for small $\varepsilon$.

\begin{remark}\label{rem:patterning_practice}
The per-sample weights $\rho_i = 1 + \varepsilon s_i$ can in principle be used directly as loss weights during training, with $\varepsilon$ (called \texttt{alpha} in \cite{patterning}) controlling the perturbation strength.  For large $\varepsilon$ the weights may become negative ($\rho_i < 0$) or the linear regime may break down, so in practice $\varepsilon$ is treated as a hyperparameter to be tuned.  In the induction circuit experiment of \cite[\S3]{patterning}, the loadings of each token on the leading singular vector (PC2) of the susceptibility matrix are mapped to per-token loss weights via a piecewise linear function: tokens with PC2 below a cutoff receive weight $a$ (ranging from $0$ to $4$ across experimental conditions) and those above receive weight $b = 1$, with linear interpolation in between.  In the parenthesis-balancing experiment of \cite[\S4]{patterning}, the susceptibility gap $\hat{\chi}^{EQ}_{z_i} - \hat{\chi}^{N}_{z_i}$ (proportional to $s_{\mathrm{opt}}$ under simplifying assumptions) identifies extreme samples, which are characterized qualitatively and used to design modified training distributions.  In both cases the pseudo-inverse provides the direction of the optimal perturbation; the magnitude is a separate choice.
\end{remark}

\subsubsection{Estimators for the pseudo-inverse}\label{subsec:estimator_gap}
The practical procedure just described involves two distinct operations: \emph{estimating} the susceptibility matrix $\hat{X}$ from SGLD samples, and then \emph{pseudo-inverting} the result. These do not commute: the pseudo-inverse of the estimator, $\hat{X}^\dagger$, need not be a good estimator of the population pseudo-inverse $X^\dagger$.

To see why, note that the pseudo-inverse is a discontinuous function of the matrix entries at rank boundaries: a small perturbation that changes the rank of $X$ (by pushing a singular value across zero) produces a large change in $X^\dagger$. Even away from rank boundaries, the pseudo-inverse amplifies estimation error in the small singular values by a factor of $\sigma_\alpha^{-1}$. If the $k$-th singular value of $X$ is $\sigma_k$ and the estimation error in $\hat{X}$ is $\delta$, then the corresponding contribution to $\hat{X}^\dagger$ has error of order $\delta/\sigma_k^2$, which can be much larger than $\delta$ for small $\sigma_k$.

The natural fix is to replace the pseudo-inverse with the \emph{ridge-regularized inverse}
\[
R_\lambda(\hat{X}) = \hat{X}^\top (\hat{X}\hat{X}^\top + \lambda I)^{-1},
\]
where $\lambda > 0$ is a regularization hyperparameter.  In the singular vector basis, $R_\lambda$ replaces each factor $1/\sigma_\alpha$ in the pseudo-inverse \eqref{eq:patterning_svd} by $\sigma_\alpha/(\sigma_\alpha^2 + \lambda)$: small singular values are smoothly damped rather than inverted, with the damping scale set by $\sqrt\lambda$.

The advantage over the pseudo-inverse is that $\hat{X} \mapsto R_\lambda(\hat{X})$ is real analytic and uniformly bounded in operator norm by $1/(2\sqrt\lambda)$.  These properties suffice for the theoretical results provided in \cite{theory_notes} to transfer: at every fixed $\lambda > 0$, the empirical ridge inverse $R_\lambda(\hat{X})$ is consistent and asymptotically unbiased as an estimator for $R_\lambda(X)$.  The trade-off is a population-level bias of order $\lambda/\sigma_r(X)^3$ relative to $X^\dagger$, controlled by $\lambda$ as a hyperparameter; see \cite{theory_notes} for details.

In practice, the experiments of \cite{patterning} show that the procedure produces meaningful interventions, with the patterning targets qualitatively achieved for moderate perturbation strengths.

\section{Conclusion}
\label{sec:conclusion}

We have presented the theory of susceptibilities from first principles, beginning with statistical mechanics and the Ising model, then specializing to neural networks. The central construction is the susceptibility $\chi(\phi; q') = -\Cov[\phi, \Delta L]$, defined for any observable $\phi$ and any perturbation direction $q'$. The covariance form is an instance of the fluctuation--dissipation theorem: the response of $\langle \phi \rangle$ to a data perturbation is encoded in the posterior fluctuations at the original data distribution.

Two choices of observable yield two natural matrices. Per-sample losses $\phi = \ell_{z'}$ give the \emph{influence matrix}, whose entries $-\Cov[\ell_{z'}, \ell_z - L]$ measure the functional coupling between data points; under the empirical posterior, this is the Bayesian influence function of \cite{singfluence1}. Component observables $\phi = \phi_C$ give the \emph{structural susceptibility matrix}, whose entries $-\Cov[\phi_C, \ell_z - L]$ reveal which model components respond to which data patterns. The first matrix is studied in \cite{singfluence1,singfluence2}; the second in \cite{lang2,lang2.5,lang3}. The empirical susceptibility estimator $\hat{\chi}_z(\phi) = -\Cov^{\mathrm{emp}}[\phi, \ell_z - L_n]$ has a dual justification: it is simultaneously an exact per-sample weight derivative of the empirical posterior \cite{gustafson1996,giordano2018,singfluence1} and an estimator of the population susceptibility via the results of \cite{theory_notes}.

The theory has been applied in several directions: \cite{lang2} uses the structural susceptibility matrix to identify the roles of attention heads in small transformers, \cite{lang2.5} tracks it across training to study developmental trajectories, and \cite{lang3} clusters the rows to discover interpretable groups of tokens. The \emph{patterning} program \cite{patterning} inverts the framework: given a desired change in structural coordinates, it computes the optimal data perturbation via the pseudo-inverse of the susceptibility matrix.

\appendix

\section{The Laplace expansion}
\label{app:laplace_cubic}

We compute $\Cov_t[\phi,\psi] = \langle\phi\psi\rangle_t - \langle\phi\rangle_t\langle\psi\rangle_t$ to order $t^{-2}$ by expanding each of the two terms separately.

Set $\pi = 1$.  Write $V$ for the loss in local coordinates with $V(0) = 0$.  Let $V_k$, $\phi_k$, $\psi_k$ denote the degree-$k$ homogeneous Taylor parts of $V$, $\phi$, $\psi$ at $0$, so that the lemma's hypotheses give
\begin{align*}
V(x) &= V_2(x) + V_3(x) + V_4(x) + \cdots, & V_2(x) &= \tfrac{1}{2}x^\top H x, & V_3(x) &= \tfrac{1}{6}T_{ijk}x_ix_jx_k, \\
\phi(x) &= \phi_2(x) + \phi_3(x) + \cdots, & \phi_2(x) &= \tfrac{1}{2}x^\top A x, & \phi_3(x) &= \tfrac{1}{6}\Phi_{ijk}x_ix_jx_k, \\
\psi(x) &= \psi_1(x) + \psi_2(x) + \cdots, & \psi_1(x) &= b^\top x, & \psi_2(x) &= \tfrac{1}{2}x^\top B x.
\end{align*}

\begin{lemma}\label{lem:phipsi_t2_gauss}
The posterior expected value of $\phi \psi$ is given to second order by
\begin{equation}\label{eq:phipsi_t2_gauss}
\langle \phi\psi \rangle_t = \frac{1}{t^2}\Bigl[\E[\phi_2(X)\psi_2(X)] + \E[\phi_3(X)\psi_1(X)] - \E[\phi_2(X)\psi_1(X)\,V_3(X)]\Bigr] + O(t^{-3}),
\end{equation}
where $X \sim \mathcal N(0, \Sigma)$. 
\end{lemma}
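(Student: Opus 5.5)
The plan is the standard rescaling argument for Laplace integrals. First we localize: outside a small ball around $0$ the loss satisfies $V \ge c > 0$, so the contributions of that region to both numerator and denominator of $\langle\phi\psi\rangle_t$ are $O(e^{-ct})$ relative to $Z(t)$. These are negligible at every polynomial order. Inside the ball we substitute $x = y/\sqrt t$. Then
\[
tV(y/\sqrt t) = V_2(y) + t^{-1/2}V_3(y) + t^{-1}V_4(y) + O(t^{-3/2}|y|^5).
\]
The observables scale as
\[
\phi(y/\sqrt t) = t^{-1}\phi_2(y) + t^{-3/2}\phi_3(y) + t^{-2}\phi_4(y) + \cdots, \qquad \psi(y/\sqrt t) = t^{-1/2}\psi_1(y) + t^{-1}\psi_2(y) + \cdots .
\]
It follows that $\phi\psi(y/\sqrt t) = t^{-3/2}\phi_2\psi_1 + t^{-2}(\phi_2\psi_2 + \phi_3\psi_1) + O(t^{-5/2})$.

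Next we expand the exponential weight, $e^{-tV} = e^{-V_2(y)}\bigl(1 - t^{-1/2}V_3(y) + O(t^{-1})\bigr)$. This makes numerator and denominator Gaussian integrals against $e^{-V_2}$; after normalization these become expectations over $X\sim\mathcal N(0,\Sigma)$, with the common factor $t^{-d/2}$ cancelling. The denominator is $1 + t^{-1/2}(-\E[V_3(X)]) + O(t^{-1})$. Its $t^{-1/2}$ coefficient vanishes because $V_3$ is odd, so the denominator is $1+O(t^{-1})$. The numerator is
\[
t^{-3/2}\E[\phi_2\psi_1] + t^{-2}\bigl(\E[\phi_2\psi_2]+\E[\phi_3\psi_1] - \E[\phi_2\psi_1V_3]\bigr) + O(t^{-5/2}).
\]
The leading term $\E[\phi_2\psi_1]$ is an odd-degree (cubic) Gaussian moment and vanishes. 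Symmetry also matters at the next half-integer order: the $t^{-5/2}$ terms are all odd-degree moments, so they vanish too. The remainder is therefore genuinely $O(t^{-3})$ rather than $O(t^{-5/2})$. Multiplying the numerator by $1+O(t^{-1})$ affects only order $t^{-5/2}$ and beyond, and by the same parity argument the first surviving correction is at $t^{-3}$. This yields \eqref{eq:phipsi_t2_gauss}.

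The main obstacle is making the error terms rigorous. We need the Taylor remainders of $\phi$, $\psi$ and $e^{-t(V-V_2)}$, integrated against $e^{-V_2}$, to be bounded uniformly in $t$. We would handle this with the standard device of restricting to $|y|\le t^{\epsilon}$ for small $\epsilon$, where $t^{-1/2}|y|^3$ is small, so the exponential expansion with Lagrange remainder is controlled by a polynomial times $e^{-V_2/2}$. The annulus $t^{\epsilon}\le|y|\le \delta\sqrt t$ contributes $O(e^{-ct^{2\epsilon}})$ since $V\ge V_2/2$ near $0$. Alternatively, as for \Cref{lem:laplace_exp}, one may cite the general Laplace expansion of \cite{kass1990validity} and only carry out the coefficient bookkeeping above. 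The bookkeeping itself reduces to tracking orders and applying the parity of Gaussian moments.
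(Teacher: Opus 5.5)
Your proposal is correct. It reaches the same three Gaussian moments as the paper, but organizes the computation differently.

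The paper does not rescale by hand. It invokes the diagrammatic form of the multivariate Laplace expansion (citing Shun--McCullagh and Wong). There, $\langle\phi\psi\rangle_t$ is a sum over graphs with one $\phi\psi$-vertex of valence $k_b\ge 3$ and $\nu_V$ vertices of valence $\ge 3$ from the Taylor parts of $V$. A graph with $m$ propagators $\Sigma/t$ contributes at order $t^{\nu_V-m}$. Solving $\nu_V=m-2$ and $k_b+\sum v_i=2m$ leaves only two cases: a lone quartic $\phi\psi$-vertex, or a cubic $\phi\psi$-vertex joined to a single $V_3$. The Leibniz rule then identifies these with $\phi_2\psi_2+\phi_3\psi_1$ and $\phi_2\psi_1$.

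Your substitution $x=y/\sqrt t$ does the same bookkeeping explicitly. Your observation that every term has degree in $y$ of the same parity as its power of $t^{-1/2}$ is what the paper's even-leg condition $k_b+\sum v_i=2m$ encodes. It is also what kills the $t^{-3/2}$ and $t^{-5/2}$ coefficients and makes the denominator $1+O(t^{-1})$; the paper absorbs that normalization into its restriction to connected graphs.

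What your route buys is self-containedness. You need only Taylor's theorem and the vanishing of odd Gaussian moments. Your localization to $|y|\le t^{\epsilon}$ (with $\epsilon<1/6$, so that $t^{-1/2}|y|^3\to 0$), together with the annulus estimate, actually justifies the $O(t^{-3})$ remainder. The paper delegates that step entirely to the cited expansion.

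What the paper's route buys is economy and scalability. The power-counting rule classifies every contribution at a given order at a glance, extends directly to higher orders and more insertions, and matches the pictures in its figure.

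One hypothesis is left tacit, by you and by the paper alike: a global condition making the tail estimate valid. For example, $\inf_{|x|\ge\delta}V>0$ together with $\int|\phi\psi|\,e^{-t_0V}<\infty$ for some $t_0$ suffices. The outer region then contributes $O(t^{d/2}e^{-ct})$ relative to $Z(t)$, which is still negligible at every polynomial order.
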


\begin{proof}
$\phi\psi$ vanishes to third order at $0$: $\phi\psi(0) = 0$, $D(\phi\psi)(0) = 0$, and $D^2(\phi\psi)(0) = 0$ by assumption on $\phi$ and $\psi$.  

The multivariate Laplace expansion presents the asymptotic series for $\langle\phi\psi\rangle_t$ as a sum over connected graphs with \emph{vertices} labeled by Taylor coefficients of the integrand at $w^*$ (one from $\phi\psi$, the rest from $V$, each of degree $\geq 3$) and \emph{edges} labeled by the \emph{propagator} $\Sigma^{ij}/t$.  With no observable insertion this is the bipartition expansion of $\log\int e^{-g}$ given in \cite[\S 3, eq.~(4)]{shun1995laplace}; in coordinates given by the Morse lemma for which $V$ is purely quadratic this is \cite[Ch.~IX, Theorem~3]{wong2001}.  A graph with one $\phi\psi$-vertex of valence $k_b \geq 3$, $\nu_V$ $V$-vertices, and $m$ propagators contributes to the order $t^{\nu_V - m}$ term in the expansion.

At $O(t^{-2})$, the constraints $\nu_V = m - 2$ and $k_b + \sum_i v_i = 2m$ (each $v_i \geq 3$) can be achieved in two ways:
\begin{itemize}
\item $k_b = 4$, $\nu_V = 0$, $m = 2$: a single quartic $\phi\psi$-vertex with both pairs of legs contracted internally.
\item $k_b = 3$, $\nu_V = 1$ (with $v_1 = 3$), $m = 3$: a cubic $\phi\psi$-vertex paired with a cubic $V$-vertex (with three propagators distributed among them, allowing self-contractions on either vertex).
\end{itemize}
By the Leibniz rule the cubic part of $\phi\psi$ is $\phi_2\psi_1$ (from $\phi^{(2)}\psi^{(1)}$) and the quartic part is $\phi_2\psi_2 + \phi_3\psi_1$.  Substitution yields the three terms in \eqref{eq:phipsi_t2_gauss}; the sign on the third term is from the $-V_3$ in the Gibbs correction.
\end{proof}

The three terms in the expansion of \Cref{lem:laplace_cov2} are presented diagrammatically in \Cref{fig:wick}.

\begin{figure}[!h]
\centering
\begin{tikzpicture}[
    vertex/.style={draw, circle, minimum size=8mm, inner sep=0pt, font=\small},
    gibbs/.style={draw, regular polygon, regular polygon sides=3, minimum size=9mm, inner sep=0pt, font=\small},
    prop/.style={thick},
    label/.style={font=\footnotesize}
]

\begin{scope}[shift={(-4.5,0)}]
\node[vertex] (phi1) at (0,1) {$\phi_2$};
\node[vertex] (psi1) at (0,-1) {$\psi_2$};
\draw[prop] (phi1.south west) to[bend right=30] (psi1.north west);
\draw[prop] (phi1.south east) to[bend left=30] (psi1.north east);
\node[label] at (0,-2.2) {(i) $x^2 \times x^2$};
\node[label] at (0,-2.7) {$t^{-1} \times t^{-1}$};
\end{scope}

\begin{scope}[shift={(0,0)}]
\node[vertex] (phi2) at (0,1) {$\phi_3$};
\node[vertex] (psi2) at (0,-1) {$\psi_1$};
\draw[prop] (phi2) to[out=200, in=160, looseness=4] (phi2);
\draw[prop] (phi2.south) to (psi2.north);
\node[label] at (0,-2.2) {(ii) $x^3 \times x^1$};
\node[label] at (0,-2.7) {$t^{-3/2} \times t^{-1/2}$};
\end{scope}

\begin{scope}[shift={(4.5,0)}]
\node[vertex] (phi3) at (-0.7,1) {$\phi_2$};
\node[vertex] (psi3) at (0.7,-1) {$\psi_1$};
\node[gibbs] (v3) at (0.7,1) {$V_3$};
\draw[prop] (phi3.east) -- (v3.west);
\draw[prop] (phi3.south east) to[bend left=20] (psi3.north west);
\draw[prop] (v3.south west) to[bend left=20] (psi3.north);
\node[label] at (0,-2.2) {(iii) $x^2 \times x^1 \times x^3$};
\node[label] at (0,-2.7) {$t^{-1} \times t^{-1/2} \times t^{-1/2}$};
\end{scope}

\end{tikzpicture}
\caption{Diagrams expressing the terms that contribute to $\Cov_t[\phi, \psi]$ at order $t^{-2}$ when $D\phi(w^*) = 0$. Circles are observable vertices; the triangle is the cubic correction $V_3$ from the Gibbs measure. Edges are propagators $\Sigma_{ij} = (H^{-1})_{ij}$: each pairs two ``legs'' (factors of $X_i$). Each vertex contributes legs equal to its $x$-degree; all legs must be paired for the Gaussian integral to be nonzero. (i)~uses a higher-order term of $\psi$; (ii)~uses a higher-order term of $\phi$ (with one self-contraction); (iii)~includes the cubic correction $V_3$. Higher vanishing order of the observables forces coupling to higher-order derivatives of $L$.}
\label{fig:wick}
\end{figure}

It remains to evaluate the three Gaussian expectations in \eqref{eq:phipsi_t2_gauss} and to subtract $\langle\phi\rangle_t\langle\psi\rangle_t$.  The Gaussian expectations may be evaluated by Isserlis' theorem \cite{isserlis1918}.

The two fourth-moment expectations are
\begin{align*}
\E[\phi_2(X)\psi_2(X)] &= \tfrac{1}{4}A_{ij}B_{kl}\bigl(\Sigma_{ij}\Sigma_{kl} + \Sigma_{ik}\Sigma_{jl} + \Sigma_{il}\Sigma_{jk}\bigr) = \tfrac{1}{4}\mathrm{tr}(A\Sigma)\mathrm{tr}(B\Sigma) + \tfrac{1}{2}\mathrm{tr}(A\Sigma B\Sigma), \\
\E[\phi_3(X)\psi_1(X)] &= \tfrac{1}{6}\Phi_{ijk}b_l\bigl(\Sigma_{ij}\Sigma_{kl} + \Sigma_{ik}\Sigma_{jl} + \Sigma_{il}\Sigma_{jk}\bigr) = \tfrac{1}{2}(\Sigma b)^\top(\Phi{:}\Sigma),
\end{align*}
where the second equalities use the symmetry of $A$, $B$, $\Phi$ to combine the three pairings.  For $\E[\phi_3\psi_1]$ all three coincide; for $\E[\phi_2\psi_2]$ the pairing $\Sigma_{ij}\Sigma_{kl}$ produces $\mathrm{tr}(A\Sigma)\mathrm{tr}(B\Sigma)$ and the other two coincide and produce $\mathrm{tr}(A\Sigma B\Sigma)$.  For $\E[\phi_2\psi_1 V_3]$, we have 
\[-\E[\phi_2(X)\psi_1(X)V_3(X)] = -\tfrac{1}{12}A_{ij}b_k T_{lmn}\,\E[X_iX_jX_kX_lX_mX_n],\] 
whose $15$ summands split into three classes depending on the indices with which the $A$ indices are paired:
\begin{align*}
\text{both $A$-indices to each other ($3$ pairings):} &\quad 3\,A_{ij}\Sigma_{ij}\,b_k T_{lmn}\Sigma_{kl}\Sigma_{mn} = 3\,\mathrm{tr}(A\Sigma)(\Sigma b)^\top(T{:}\Sigma), \\
\text{one $A$-index to $b$ ($6$ pairings):} &\quad 6\,(\Sigma A\Sigma b)_l\,T_{lmn}\Sigma_{mn} = 6\,b^\top\Sigma A\Sigma(T{:}\Sigma), \\
\text{both $A$-indices to $T$ ($6$ pairings):} &\quad 6\,(\Sigma A\Sigma)_{lm}\,T_{lmn}(\Sigma b)_n = 6\,(\Sigma b)^\top\bigl(T{:}(\Sigma A\Sigma)\bigr),
\end{align*}
so 
\[
-\E[\phi_2\psi_1 V_3] = -\tfrac{1}{4}\mathrm{tr}(A\Sigma)(\Sigma b)^\top(T{:}\Sigma) - \tfrac{1}{2}b^\top\Sigma A\Sigma(T{:}\Sigma) - \tfrac{1}{2}(\Sigma b)^\top\bigl(T{:}(\Sigma A\Sigma)\bigr).
\]

The product $\langle\phi\rangle_t\langle\psi\rangle_t$ is given to order $t^{-2}$ by \Cref{lem:laplace_exp} applied to $\phi$ and $\psi$ individually:
\[
\langle\phi\rangle_t = \frac{\mathrm{tr}(A\Sigma)}{2t} + O(t^{-2}), \qquad
\langle\psi\rangle_t = \frac{\mathrm{tr}(B\Sigma) - b^\top\Sigma(T{:}\Sigma)}{2t} + O(t^{-2}),
\]
so
\[
\langle\phi\rangle_t\langle\psi\rangle_t = \frac{1}{4t^2}\mathrm{tr}(A\Sigma)\bigl[\mathrm{tr}(B\Sigma) - b^\top\Sigma(T{:}\Sigma)\bigr] + O(t^{-3}).
\]

Subtracting from \eqref{eq:phipsi_t2_gauss}, the $\tfrac{1}{4}\mathrm{tr}(A\Sigma)\mathrm{tr}(B\Sigma)$ terms cancel, as do the $\tfrac{1}{4}\mathrm{tr}(A\Sigma)(\Sigma b)^\top(T{:}\Sigma)$ terms (using $\Sigma^\top = \Sigma$).  The four surviving terms are exactly those of \eqref{eq:cov_full_t2}.

\end{document}